\documentclass[11pt]{article}
\usepackage[a4paper]{geometry}
\usepackage{amsthm}
\usepackage{amsmath}
\usepackage{amssymb}
\usepackage{amsfonts}
\usepackage{graphicx}
\usepackage{color}
\usepackage[bf,SL,BF]{subfigure}
\usepackage{url}
\usepackage{epstopdf}
\usepackage{pdfsync}
\usepackage[colorlinks]{hyperref}
\usepackage[all]{xypic}
\usepackage{comment}

\usepackage{upgreek}

\hypersetup{
    linkcolor=blue,
}

\newlength{\fixboxwidth}
\setlength{\fixboxwidth}{\marginparwidth}
\addtolength{\fixboxwidth}{-6.8pt}

\usepackage{epsfig,amsbsy,graphicx,multirow}

\usepackage{ algorithm, algorithmic}
\renewcommand{\algorithmiccomment}[1]{\bgroup\hfill//~#1\egroup}

\usepackage[all]{xy}
\usepackage{accents}

\setcounter{secnumdepth}{3} \numberwithin{equation}{section}

\def\R{\mathbb{R}}

\def\cN{\mathcal{N}}

\def\Y{{\bf\mathcal{Y}}}

\def\E{\mathbb{E}}

\def\B{\mathcal{B}}

\def\X{{\bf\mathcal{X}}}

\def\I{\mathcal{I}}

\def\L{\mathcal{L}}

\def\cW{\mathcal{W}}

\def\H{\mathcal{H}}

\def\<{\big\langle}
\def\>{\big\rangle}
\def\Img{\operatorname{Im}}
\def\Ker{\operatorname{Ker}}

\def\diiv{\operatorname{div}}

\def\Tr{\operatorname{Tr}}

\def\Span{\operatorname{span}}

\def\s{s}

\definecolor{red}{rgb}{0.9, 0, 0}

\newtheorem{Theorem}{Theorem}[section]
\newtheorem{Proposition}[Theorem]{Proposition}

\newtheorem{Corollary}[Theorem]{Corollary}
\newtheorem{Remark}[Theorem]{Remark}

\newtheorem{Problem}{Problem}

\begin{document}
\title{Kernel Flows:\\
from learning kernels from data into the abyss}

\date{\today}

\author{Houman Owhadi\thanks{Corresponding author. California Institute of Technology, 1200 E California Blvd, MC 9-49, Pasadena, CA 91125, USA, owhadi@caltech.edu} and Gene Ryan Yoo\thanks{California Institute of Technology, 1200 E California Blvd, MC 253-47, Pasadena, CA 91125, USA, gyoo@caltech.edu} }

\maketitle

\begin{abstract}
Learning can be seen as approximating an unknown function by interpolating the training data.
Kriging offers a solution to this problem based on the prior specification of a kernel.
We explore a numerical approximation approach  to  kernel selection/construction based on the simple premise that a kernel must be good if the number of interpolation points  can be halved without significant loss in accuracy (measured using the intrinsic RKHS norm $\|\cdot\|$ associated with the kernel).
We first test and motivate this idea on a simple problem of recovering the Green's function of an elliptic PDE (with inhomogeneous coefficients) from the sparse observation of one of its solutions. Next we consider the problem of learning non-parametric families of deep kernels of the form $K_1(F_n(x),F_n(x'))$ with $F_{n+1}=(I_d+\epsilon G_{n+1})\circ F_n$ and $G_{n+1} \in \Span\{K_1(F_n(x_i),\cdot)\}$.
With the proposed approach constructing the kernel becomes equivalent to integrating a  stochastic data driven dynamical system, which allows for the training of very deep (bottomless) networks and the exploration of their properties. These networks learn by constructing flow maps in the kernel and input spaces via  incremental data-dependent deformations/perturbations  (appearing as the cooperative counterpart of adversarial examples) and, at profound depths, they (1) can achieve accurate classification from only one data point per class (2) appear to learn archetypes of each class (3) expand distances between points that are in different classes and contract distances between points in the same class.

For  kernels parameterized by the weights of  Convolutional Neural Networks, minimizing  approximation errors incurred by halving  random subsets of interpolation points, appears  to outperform training (the same CNN architecture)  with relative entropy and dropout.
 \end{abstract}

\section{Introduction}
Despite their popularity and impressive achievements \cite{lecun2015deep} Artificial Neural Networks (ANNs) remain difficult to analyze. From a deep kernel learning perspective \cite{wilson2016deep}, the action of the last layer of an ANN can be seen as that of regressing the data with a kernel parameterized by the weights of all the previous layers. Therefore analysing the problem of performing a regression of the data with a kernel that is also learnt from the data could help understand ANNs and elaborate a rigorous theory for deep learning.
Hierarchical Bayesian Inference \cite{schwaighofer2005learning} (placing a prior on a space of kernels and conditioning on the data) and Maximum Likelihood Estimation \cite{williams1996gaussian} (choosing the kernel which maximizes the probability of observing the data) are well known approaches for learning the kernel.
In this paper we explore a numerical approximation approach (motivated by interplays between Gaussian Process Regressions and Numerical Homogenization \cite{OwhScobook2018})  based on the simple premise that a kernel must be good if the number of points $N$ used to perform the interpolation of data can be reduced to $N/2$ without significant loss in accuracy (measured using the intrinsic RKHS norm $\|\cdot\|$ associated with the kernel). Writing $u$ and $v$ for the interpolation of the data with $N$ and $N/2$ points, the relative error $\rho=\frac{\|u-v\|^2}{\|u\|^2}$ induces a data dependent ordering on the space of kernels.
The Fr\'{e}chet derivative of $\rho$   identifies the direction of the  gradient descent and leads to a simple  algorithm (Kernel Flow) for its minimization:
(1) Select $N_f$ ($\leq N$) points (at random, uniformly, without replacement) from the $N$ training data points  (2)
Select $N_c=N_f/2$ points (at random, uniformly, without replacement) from the $N_f$ points (3) Perturb the kernel in the  gradient descent direction of  $\rho$ (computed from the current kernel and the $N_f, N_c$ points) (4) Repeat.

To provide some context for this algorithm, we first summarize (in Section \ref{sec2}) interplays between Kriging, Gaussian Process Regression, Game Theory and Optimal Recovery. The identification (in Sec.~\ref{sec3}) of
$\rho$ and its  Fr\'{e}chet derivative  leads (in  Sec.~\ref{secfamker}) to the proposed algorithm in a parametric setting.

In Sec.~\ref{secjhdkhj33} we describe interplays between the proposed algorithm and Numerical Homogenization \cite{OwhScobook2018}
by implementing and testing the parametric version of Kernel Flow for the (simple and amenable to analysis) problem of (1) recovering the unknown conductivity $a$ of the PDE $-\diiv(a\nabla u)=f$ based on seeing $u\in \H^1_0((0,1))$ at a finite number $N$ of points (2) approximating $u$ between measurement points. For this problem, $a$, $u$ and $f$ are all unknown, we only know that $f\in L^2$ and we try to learn the Green's function of the PDE (seen as a kernel parameterized by the unknown conductivity $a$). Experiments suggest that, by minimizing $\rho$ (parameterized by the conductivity), the algorithm can recover the conductivity and significantly improve the accuracy of the interpolation.

\begin{figure}[h]
\begin{center}
\includegraphics[width= \textwidth]{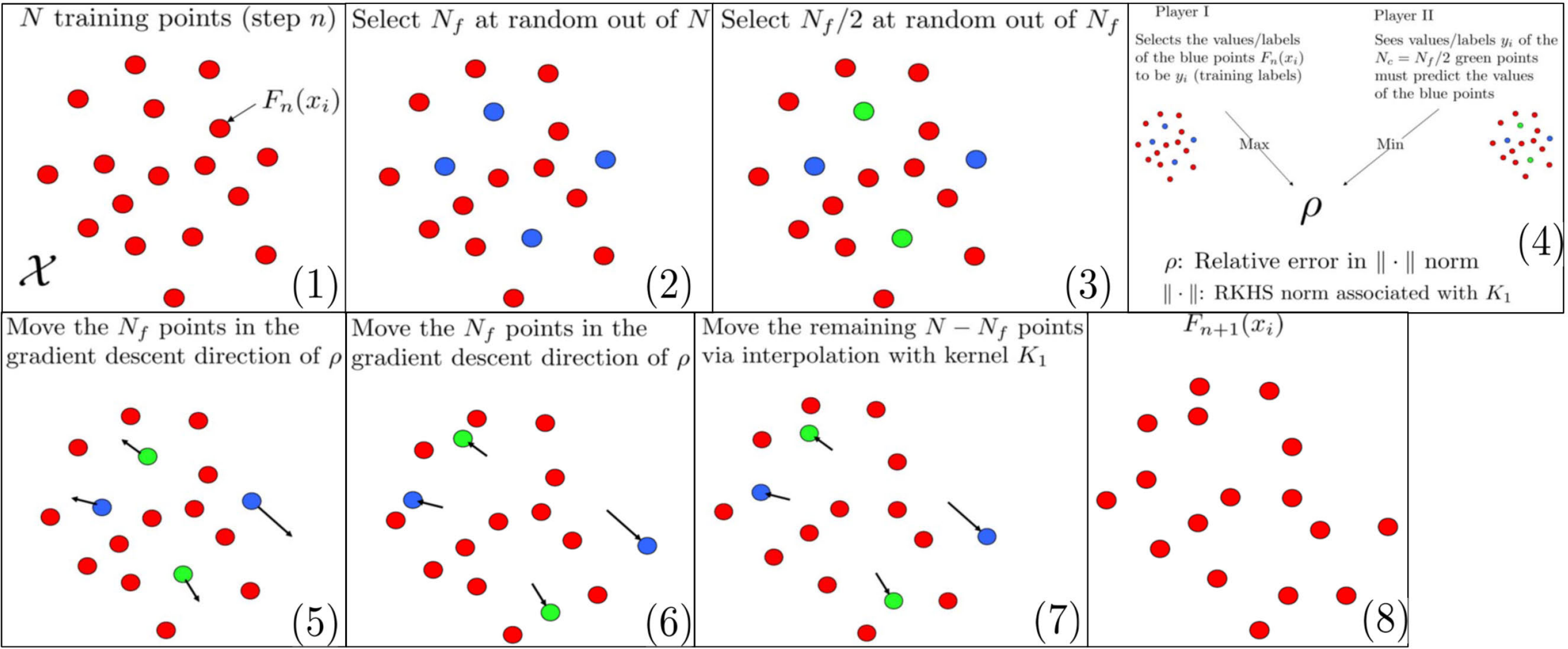}
\caption{The game theoretic interpretation of the step $n\rightarrow n+1$ of Kernel Flow. (1) Starting from $F_n$ and
 the $N$ data points $(F_n(x_i),y_i)$ (2)  Select $N_f$ indices out of $N$ (3) Select $N_f/2$ indices  out of $N_f$ (4) Consider the zero sum adversarial game where Player I chooses the labels of the $N_f$ points to be $y_i$ and Player II sees half of them and tries to guess the other and let
 $\rho$ be the loss of Player II in that game (using relative error in the RKHS norm associated with $K_1$) (5, 6) Move the $N_f$ selected points $F_n(x_i)$ to decrease the loss of Player II (7) Move the remaining $N-N_f$ (and any other point $x$) points via interpolation with the kernel $K_1$ (this specifies  $F_{n+1}$) (8) Repeat.}
\label{figkf2}
\end{center}
\end{figure}

Next (in Sec.~\ref{secKernelFlow}) we derive a non parametric version of the proposed algorithm that learns
a kernel of the form
\begin{equation}
K_n(x,x')=K_{1}(F_n(x),F_n(x'))\,,
\end{equation}
where $K_1$ is a standard kernel (e.g. Gaussian $K_1(x,x')=e^{-\gamma |x-x'|^2}$) and
$F_n$ maps the input space into itself, $n\rightarrow F_n$ is a discrete flow in the input space,
and $F_{n+1}$ is obtained from $F_n$ by interrogating random subsets of  the training data as described in Fig.~\ref{figkf2} (which also summarizes the game theoretic interpretation of the proposed Algorithm, note that the game is incrementally rigged to minimize the loss of Player II).

\begin{figure}[h]
\begin{center}
\includegraphics[width= \textwidth]{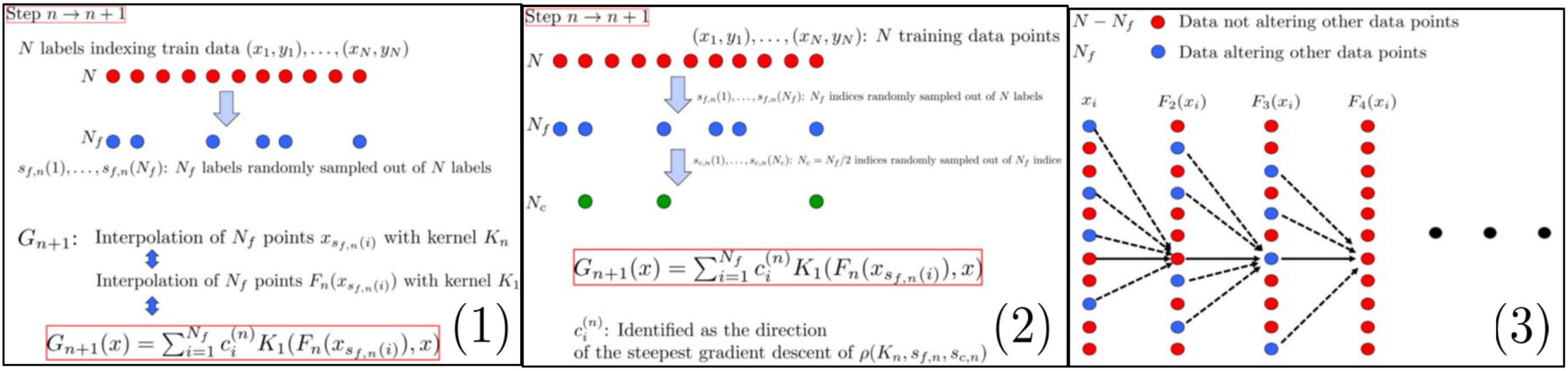}
\caption{The Kernel Flow Algorithm. (1) $N_f$ indices $s_{f,n}(1),\ldots,s_{f,n}(N_f)$ are randomly sampled out of $N$ and $G_{n+1}$ belongs to the linear span of the $K(F(x_{s_{f,n}(i)}^{(n)}),x)$ (2) the coefficients in the representation of $G_{n+1}$ are found as the direction of the  gradient descent of $\rho$ (3) The value of $F_{n+1}(x_i)$ is the sum of $F_n(x_i)$ a small perturbation depending on the joint values of the
$F_n(x_{s_{f,n}(j)})$.
}
\label{figkf1}
\end{center}
\end{figure}

\begin{figure}[h]
\begin{center}
\includegraphics[width= \textwidth]{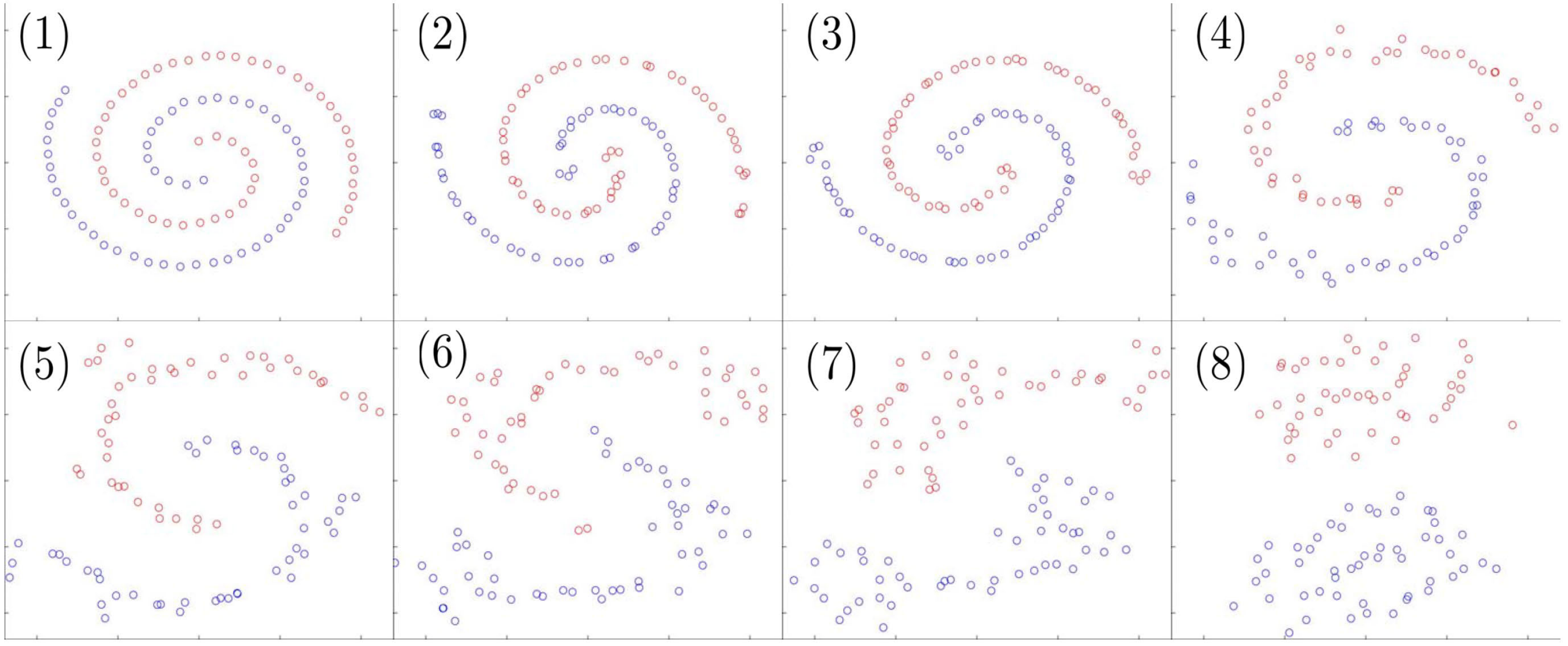}
\caption{Swiss Roll Cheesecake. $N=100$.  Red points have label $-1$ and blue points have label $1$. $F_n(x_i)$ for 8 different values of $n$. }
\label{figswissroll2}
\end{center}
\end{figure}
\begin{figure}[h]
\begin{center}
\includegraphics[width= \textwidth]{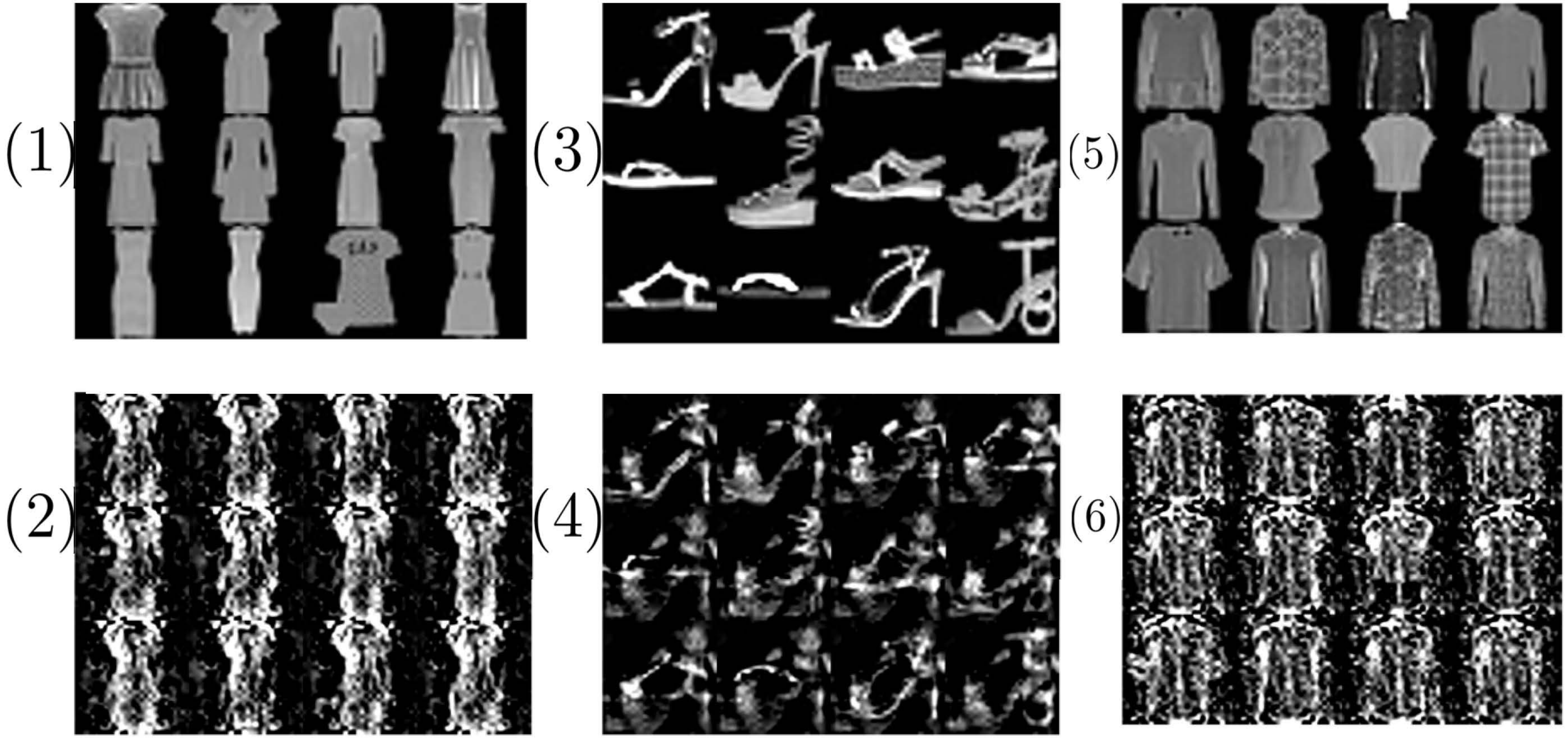}
\caption{Results for Fashion-MNIST. $N=60000$, $N_f=600$ and $N_c=300$. (1, 3, 5) Training data $x_i$ (class 3, 5 and 6) (2, 4, 6) $F_n(x_i)$ (class 3, 4 and 6) for $n=50000$. }
\label{figfmnist2}
\end{center}
\end{figure}

The proposed algorithm (see Fig.~\ref{figkf1} for a  summary of its  structure) can be reduced to an iteration of the form
 \begin{equation}
K_n(x,x')=K_{n-1}(x+\epsilon G_n(x), x+ \epsilon G_n(x')).
\end{equation}
 Writing $x_i$ for the training points and $F_1(x)=x$ and $F_n(x)=(I_d+\epsilon G_n)\circ F_{n-1}(x)$, the network $F_n$ (composed of $n$ layers) is learnt from the data in a recursive manner (across layers) by
(1) using $g_i^{(n)}:=G_n \circ F_{n-1}(x_{s_{f,n}(i)})$ for a random subset $\{x_{s_{f,n}(i)}|1\leq i \leq N_f\}$  of the points $\{x_i|1\leq i\leq N\}$ as training parameters and interpolating $G_n$ with the kernel $K_1$ in between the points $F_{n-1}(x_{s_{f,n}(i)})$ (2) selecting $g_i^{(n)}$ in the direction of the  gradient descent of $\rho$ at each step.

Writing $(x_i,y_i)$ for the $N$ training data points, $G_{n+1}$ ends up being  of the form
\begin{equation}\label{eqkejdkdh}
G_{n+1}(x)=\sum_{i=1}^{N_f} c_i^{(n)} K_1(F_n(x_{s_{f,n}(i)}),x)\,
\end{equation}
where $s_{f,n}(1),\ldots, s_{f,n}(N_f)$ are $N_f$ indices sampled at random (uniformly without replacement) from $\{1,\ldots,N\}$.
Note that  the kernel  $K_n$ produced at any step $n$, is a Deep Hierarchical Kernel (in the sense of \cite{wilson2016deep, steinwart2016learning})
satisfying the  nesting equations
\begin{equation}
K_n(x,x')=K_{n-1}(x+\epsilon \sum_{i=1}^{N_f} c_i^{(n-1)} K_{n-1}(x_{s_{f,n-1}(i)},x), x'+\epsilon \sum_{i=1}^{N_f} c_i^{(n-1)} K_{n-1}(x_{s_{f,n-1}(i)},x'))\,.
\end{equation}
Furthermore, the structure of the network defined by that kernel is randomized through the random selection of the $N_f$ points (see Fig.~\ref{figkf1}.3).
The coefficients $c_i^{(n)}$ in \eqref{eqkejdkdh} are identified through one step of gradient descent of the  relative error $\rho$ (measured in the RKHS norm associated with $K_1$) of the approximation of the labels (the $y_{s_{f,n}(i)}$) of those $N_f$ points upon seeing half of them. From the game theoretic perspective of Fig.~\ref{figkf2}, $\rho$ is the loss of Player II (attempting the guess the unseen labels)
 and the points $F_{n+1}(x_{s_{f,n}(i)})$ are perturbations of the points $F_n(x_{s_{f,n}(i)})$ in a direction which seeks to minimize this loss.
 Note also that the proposed (Kernel Flow) algorithm produces a  flow $F_n$ (randomized through sampling of the training data) in the input space and a (stochastic) dynamical system $K_1(F_n(x),F_n(y))$ in the kernel space.
Since learning becomes  equivalent to integrating a dynamical system, it does not require back-propagation nor guessing the architecture of the network, which enables the construction of very deep networks and the exploration of their properties.

We implement this algorithm (and visualize its flow) for MNIST \cite{yann1998mnist}, Fashion-MNIST \cite{xiao2017fashion}, the  Swiss Roll Cheesecake. See Fig.~\ref{figswissroll2} and Fig.~\ref{figfmnist2} for illustrations of the flow $F_n(x)$ for the   Swiss Roll Cheesecake
and Fashion-MNIST. For these datasets  we observe that
 (1) the  flow $F_n$ unrolls the Swiss Roll Cheesecake (2) the flow $F_n$ expands distances between points that are in different classes and contracts distances between points in the same class (towards archetypes of each class)
(3) at profound depths  ($n=12000$ layers for MNIST and $n=50000$ layers for Fashion-MNIST) the resulting kernel $K_n$ achieves a small average error ($1.5\%$ for MNIST and $10\%$ for Fashion-MNIST) using only $10$ points as interpolation points (i.e. one point for each class)   (4) the incremental data-dependent perturbations  $\epsilon G_{n+1}$ seem to take advantage of a cooperative mechanism appearing as the counterpart of the one associated with adversarial examples \cite{szegedy2013intriguing, owhadiBayesiansirev2013}.

Finally (in Sec.~\ref{secCNN}) we derive an ANN version of the  algorithm by identifying the action of the last layer of the ANN  as that of regressing the data with a kernel parameterized by the weights of all the previous layers (learnt by minimizing $\rho$ or its analogous $L^2$ version). This algorithm is then tested for the MNIST and Fashion-MNIST data sets and shown to contract in-class distances and inter-class distances in a similar manner as above (thereby achieving accuracies comparable to the state of the art with a small number of interpolation points).
For  kernels parameterized by the weights of a given Convolutional Neural Network, minimizing  $\rho$ or its $L^2$ version, appears, for the MNIST and fashion MNIST data sets, to outperform training (the same CNN architecture)  with relative entropy and dropout.

This paper is not aimed at identifying the state of the art algorithm in terms of accuracy nor complexity. It is simply motivated by an attempt to offer some insights (from a numerical approximation perspective) on mechanisms that may be at play in deep learning.

\section{Learning as an interpolation problem}\label{sec2}
It is well understood \cite{poggio2003mathematics} that ``learning techniques
are similar to fitting a multivariate function to a certain number of measurement data'', e.g. solving the following problem.

\begin{Problem}\label{pb1}
Given input/output data $(x_1,y_1),\ldots,(x_N,y_N)\in \X \times \Y$  recover an unknown function $u^\dagger$ mapping $\X$ to $\Y$ such that
\begin{equation}\label{eqkkkhhiuhiu}
u^\dagger(x_i)=y_i\text{ for }i\in \{1,\ldots,N\}\,.
\end{equation}
\end{Problem}

\paragraph{Optimal recovery.}
In the setting of optimal recovery \cite{micchelli1977survey} the ill posed problem \ref{pb1} can be turned into a well posed one by
restricting candidates for $u$ to belong to a space of functions $\B$ endowed with a norm $\|\cdot\|$ and identifying the optimal recovery as the minimizer of the relative error
\begin{equation}\label{eqihdiudehd}
\min_{v} \max_{u}\frac{\|u-v\|^2}{\|u\|^2}\,,
\end{equation}
where the max is taken over $u \in \B$ and the min is taken over candidates in $v\in \B$ such that $v(x_i)=u(x_i)$. Observe that
$\B^*$, the dual space of $\B$, must contain
delta Dirac functions
\begin{equation}
\phi_i(\cdot):=\updelta(\cdot-x_i)\,.
\end{equation}
for the validity of the constraints $u(x_i)=y_i$.
Consider now the case where $\|\cdot\|$ is quadratic, i.e. such that
\begin{equation}
\|u\|^2=[Q^{-1}u, u]\,,
\end{equation}
where $[\phi,u]$ stands for the duality product between $\phi\in\B^*$  and $u\in \B$ and $Q\,:\,\B^*\rightarrow \B$ is a positive symmetric linear bijection (i.e. such that $[\phi,Q\phi]\geq 0$ and $[\phi,Q\varphi]=[\varphi,Q\phi]$ for $\varphi,\phi \in \B^*$).
In that case the optimal solution of \eqref{eqihdiudehd} has the explicit form (writing $y_i$ for $u(x_i)$)
\begin{equation}\label{eqkjehdkjheg}
v^\dagger=\sum_{i,j=1}^N y_i A_{i,j} Q \phi_j\,,
\end{equation}
where $A=\Theta^{-1}$ and $\Theta$ is the $N\times N$ Gram matrix with entries $\Theta_{i,j}=[\phi_i, Q \phi_j]$. Furthermore $v^\dagger$ can also be identified as the minimizer of
\begin{equation}\label{eqvarform}
\begin{cases}
\text{Minimize }\|\psi\|\\
\text{Subject to }\psi\in \B\text{ and }[\phi_i,\psi]=y_i, \quad i\in \{1,\ldots,N\}.
\end{cases}
\end{equation}

\paragraph{Kriging.}
Defining $K$ as the kernel
\begin{equation}
K(x,x')=[\updelta(\cdot-x), Q \updelta(\cdot-x')]\,,
\end{equation}
$(\B,\|\cdot\|)$ can be seen as a Reproducing Kernel Hilbert Space endowed with the norm
\begin{equation}
\|u\|^2=\sup_{\phi \in \B^*} \frac{(\int \phi(x) u(x)\,dx)^2}{\int \phi(x)K(x,y)\phi(y)\,dx\,dy}\,,
\end{equation}
and \eqref{eqkjehdkjheg} corresponds to the classical representer theorem
\begin{equation}\label{eqjehgdgd}
v^\dagger(\cdot)=y^T A  K(x_.,\cdot)\,,
\end{equation}
using the vectorial notations $y^T A  K(x_.,\cdot)=\sum_{i,j=1}^N y_i A_{i,j} K(x_j,\cdot)$ with $A=\Theta^{-1}$ and
 $\Theta_{i,j}=K(x_i,x_j)$.

\paragraph{Gaussian Process Regression numerical approximation games.}
Writing $\xi$ for the centered Gaussian Process with covariance function $K$, \eqref{eqjehgdgd} can also be recovered via Gaussian Process Regression as
\begin{equation}
v^\dagger(x)=\E\big[\xi(x)\mid \xi(x_i)=y_i\big]\,.
\end{equation}
This link between Numerical Approximation and Gaussian Process Regression emerges naturally by viewing \eqref{eqihdiudehd} as an adversarial zero sum game \cite{OwhScobook2018,  OwhadiMultigrid:2015, gamblet17, SchaeferSullivanOwhadi17} between two players (I and II where I tries to maximize the relative error and II tries to minimize it after seeing the values of $u$ at the points $x_i$) and observing that $\xi$ and $v$ are optimal mixed/randomized strategies for players I and II (forming a saddle point for the minimax lifted to measures over functions).

\section{What is a good kernel?}\label{sec3}
Although the optimal recovery of $u^\dagger$ has a well established theory, it relies on the prior specification of a quadratic norm $\|\cdot\|$ or equivalently of a kernel $K$. In practical applications the performance of the interpolant \eqref{eqjehgdgd} (e.g. when employed in a classification problem) is sensitive to the choice of $K$. How should $K$ be selected to achieve generalization?
Although ANNs \cite{lecun2015deep} seem to address this question  (by performing variants of the interpolation \eqref{eqjehgdgd} with the last layer of the network using
   a kernel $K$ parameterized by the weights of the previous layers and learnt by adjusting those weights) they remain difficult to analyze and the introduction of regularization steps (such as dropout or early stop) introduced to achieve generalization appear to be discovered through a laborious process of trial and error \cite{zhang2016understanding}.

   Is there a systematic way of identifying a good kernel? What is good kernel?

 We will now explore these questions from  the  perspective  of interplays between numerical approximation and inference \cite{OwhScobook2018} and the simple premise that a kernel must be good if the number of points $N$ used to perform the interpolation of data can be reduced to $m=\operatorname{round}(N/2)$ without significant loss in accuracy (measured using the intrinsic RKHS norm associated with the kernel).

 To label the $m$ sub-sampled (test) data points,
  let $\s(1),\ldots,\s(m)$ be a selection of $m$ distinct elements of $\{1\ldots,N\}$. Observe that $\{x_{\s(1)},\ldots,x_{\s(m)}\}$ forms a strict subset of  $\{x_1,\ldots,x_N\}$.
  Write  $v^\s$ for the optimal recovery of $u^\dagger$ upon seeing its values at the points $x_{\s(1)},\ldots,x_{\s(m)}$, and observe that  $v^\s(\cdot)=\sum_{i=1}^m \bar{y}_i \bar{A}_{i,j} K(x_{\s(j)},i)$
 with $\bar{y}_i=y_{\s(i)}$ and $\bar{A}=\bar{\Theta}^{-1}$ with $\bar{\Theta}_{i,j}=\Theta_{\s(i),\s(j)}$.
Let $\pi$ be the corresponding $m\times N$ sub-sampling matrix defined by
$ \pi_{i,j}=\delta_{\s(i), j}$
and observe that
\begin{equation}\label{vsigm}
v^\s= y^T \tilde{A} K(x_.,\cdot)
\end{equation}
with $\tilde{A}=\pi^T \bar{A}\pi$ and $\bar{A}=(\pi \Theta \pi^T)^{-1}$.
\begin{Proposition}\label{propjhgfytf}
For $v^\dagger$ and $v^\s$ defined as in \eqref{eqjehgdgd} and \eqref{vsigm}, we have
\begin{equation}\label{eqkjhedjhdg}
\|v^\dagger-v^\s\|^2=y^T A y-y^T \tilde{A} y\,.
\end{equation}
\end{Proposition}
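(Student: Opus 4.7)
My plan is to obtain the identity from the RKHS Pythagorean theorem, using the fact that both $v^\dagger$ and $v^\s$ are minimum-norm interpolants of $u^\dagger$ on nested sets of points. Writing $\H_K$ for the RKHS with kernel $K$, equation \eqref{eqvarform} characterizes $v^\dagger$ (resp.\ $v^\s$) as the orthogonal projection in $\H_K$ of any $u \in \H_K$ satisfying $u(x_i)=y_i$ for $i\in\{1,\ldots,N\}$ (resp.\ $i\in\{\s(1),\ldots,\s(m)\}$) onto $\Span\{K(x_i,\cdot)\colon 1\le i\le N\}$ (resp.\ $\Span\{K(x_{\s(j)},\cdot)\colon 1\le j\le m\}$). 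In particular $v^\s\in \Span\{K(x_{\s(j)},\cdot)\}$ and $v^\dagger$ interpolates $u^\dagger$ at the sub-sampled points as well, so $(v^\dagger - v^\s)(x_{\s(j)})=0$ for all $j$.

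The first step is to invoke the reproducing property: $\langle v^\dagger-v^\s,K(x_{\s(j)},\cdot)\rangle = (v^\dagger-v^\s)(x_{\s(j)})=0$ for every $j$, so $v^\dagger-v^\s$ is orthogonal to $\Span\{K(x_{\s(j)},\cdot)\}$, which contains $v^\s$. Pythagoras therefore yields
\begin{equation}
\|v^\dagger-v^\s\|^2 = \|v^\dagger\|^2 - \|v^\s\|^2.
\end{equation}

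The second step is to evaluate the two squared norms in closed form using the representer formulas \eqref{eqjehgdgd} and \eqref{vsigm}. For a function of the form $\sum_i \alpha_i K(x_i,\cdot)$ the squared RKHS norm is $\alpha^T \Theta \alpha$. Applying this with $\alpha=Ay$ for $v^\dagger$ gives $\|v^\dagger\|^2 = y^T A\,\Theta\, A\, y = y^T A y$ since $A=\Theta^{-1}$. For $v^\s$, writing it as $\sum_j (\bar A \bar y)_j K(x_{\s(j)},\cdot)$ with $\bar y=\pi y$ and $\bar\Theta=\pi\Theta\pi^T$ gives $\|v^\s\|^2 = \bar y^T \bar A\,\bar\Theta\,\bar A\,\bar y = \bar y^T \bar A\,\bar y = y^T \pi^T \bar A \pi y = y^T \tilde A y$. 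Combining the two steps yields \eqref{eqkjhedjhdg}.

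I do not expect any real obstacle; the only point to be slightly careful about is verifying the orthogonality relation $(v^\dagger - v^\s)(x_{\s(j)}) = 0$, which uses both that $v^\dagger$ interpolates at the full data set $\{x_i\}$ (hence at the subset) and that $v^\s$ interpolates at $\{x_{\s(j)}\}$. As an alternative, a purely algebraic proof works by expanding $\|v^\dagger-v^\s\|^2 = (Ay-\tilde Ay)^T \Theta (Ay-\tilde Ay)$ and using $A\Theta A = A$, $\tilde A\Theta\tilde A = \pi^T\bar A(\pi\Theta\pi^T)\bar A\pi = \tilde A$, and $A\Theta\tilde A = \tilde A$ (from $A\Theta = I$); this reproduces the same identity and serves as a direct verification.
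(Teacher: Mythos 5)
Your proof is correct and follows essentially the same route as the paper: the orthogonal (Pythagorean) decomposition $\|v^\dagger\|^2=\|v^\s\|^2+\|v^\dagger-v^\s\|^2$ combined with the closed-form evaluations $\|v^\dagger\|^2=y^TAy$ and $\|v^\s\|^2=y^T\tilde Ay$. The only cosmetic difference is that you justify the orthogonality via the reproducing property, whereas the paper cites the variational characterization of $v^\s$ as the minimum-norm interpolant on the sub-sampled constraints; these are equivalent.
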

\begin{proof}
Proposition \ref{propjhgfytf} is particular case of \cite[Prop.~13.29]{OwhScobook2018}.
The proof follows simply from $\|v^\dagger\|^2=y^T A y$, $\|v^\s\|^2=\bar{y}^T \bar{A} \bar{y}=y^T \tilde{A} y$ and the orthogonal decomposition
$\|v^\dagger\|^2=\|v^\s\|^2+\|v^\dagger-v^\s\|^2$ implied by the fact that
$v^\s$ is the minimizer of $\|\psi\|^2$ subject to the constraints $[\phi_{\s(i)},\psi]=y_{\s(i)}$ and that $v^\dagger$ satisfies those constraints.
\end{proof}

Let $\rho$ be the ratio
\begin{equation}\label{eqkedhkdh}
\rho:=\frac{\|v^\dagger-v^\s\|^2}{\|v^\dagger\|^2}\,.
\end{equation}
Note that  a value of $\rho$ close to zero indicates
that $v^\s$ is a good approximation of $v^\dagger$ (and that most of the energy of $v^\dagger$ is contained in $v^\s$) which is a desirable condition for the kernel $K$ to achieve generalization. Furthermore, Prop.~\ref{propjhgfytf} implies that
$\rho\in [0,1]$ and
\begin{equation}
\rho=1-\frac{y^T \tilde{A} y}{y^T A y}\,.
\end{equation}
Fixing $y$ and $\pi$, $\rho$ can be seen as a function of $A$ which we will write $\rho(A)$. Since $A=\Theta^{-1}$, $\rho$ can also be viewed as a function of $\Theta$ which, abusing notations, we will write $\rho(\Theta)$.
Motivating by the application of $\rho$ to the ordering of space of kernels (a small $\rho$ being indicative of a good kernel) we will, in the following proposition, compute its Fr\'echet derivative with respect to small perturbations of $A$ or of $\Theta$.

\begin{Proposition}\label{propekjdhgud}
Write $z:=A^{-1}\tilde{A} y$ with $\tilde{A}:=\pi^T (\pi A^{-1} \pi^T)^{-1}\pi$ defined as above.\footnote{The operator $P:=\tilde{A}A^{-1}$ is a projection with $\Img(P)=\Img(\pi^T)$ and $\Ker(P)=A \Ker(\pi)$ and
from the perspective of numerical homogenization  $\tilde{A}$ can be interpreted as the homogenized version of $A$ \cite[Sec.~13.10.3]{OwhScobook2018}, see \cite[Chap.~13.10]{OwhScobook2018} for further  geometric properties.}
It holds true that
\begin{equation}\label{eqkejdkddjdj}
\rho(A+\epsilon S)=\rho(A)+\epsilon \frac{(1-\rho(A)) y^T S y  - z^T S z }{y^T A y} +\mathcal{O}(\epsilon^2)\,,
\end{equation}
and, writing\footnote{Note that $\hat{z}=\Theta^{-1}z=\tilde{A} y$.} $\hat{y}:=\Theta^{-1} y$ and $\hat{z}:=\pi^T (\pi \Theta \pi^T)^{-1}\pi y$,
\begin{equation}\label{eqlkejdhkdhjkdh}
\rho(\Theta+\epsilon T)=\rho(\Theta)-\epsilon \frac{(1-\rho(\Theta)) \hat{y}^T T \hat{y}  - \hat{z}^T T \hat{z} }{\hat{y}^T \Theta \hat{y}}+\mathcal{O}(\epsilon^2)\,.
\end{equation}
\end{Proposition}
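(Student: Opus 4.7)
The plan is to start from the closed-form expression $\rho(A)=1-y^{T}\tilde{A}y/(y^{T}Ay)$ that Proposition \ref{propjhgfytf} provides, and then differentiate numerator and denominator separately with respect to the perturbation parameter $\epsilon$, finally assembling the pieces with the quotient rule $(a+\epsilon b)/(c+\epsilon d)=a/c+\epsilon(bc-ad)/c^{2}+O(\epsilon^{2})$.

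For the first identity \eqref{eqkejdkddjdj}, the denominator $y^{T}(A+\epsilon S)y$ trivially contributes $y^{T}Sy$ at first order. The nontrivial piece is $\tilde{A}(A)=\pi^{T}(\pi A^{-1}\pi^{T})^{-1}\pi$, which involves a double inversion. Applying the standard rules $\frac{d}{d\epsilon}(A+\epsilon S)^{-1}\big|_{0}=-A^{-1}SA^{-1}$ and then $\frac{d}{d\epsilon}M(\epsilon)^{-1}\big|_0=-M(0)^{-1}M'(0)M(0)^{-1}$ to the outer inverse produces
\[
\frac{d\tilde{A}}{d\epsilon}\Big|_{0}=\pi^{T}(\pi A^{-1}\pi^{T})^{-1}\pi\,A^{-1}SA^{-1}\,\pi^{T}(\pi A^{-1}\pi^{T})^{-1}\pi=\tilde{A}A^{-1}SA^{-1}\tilde{A}.
\]
Contracting with $y$ on both sides and recognizing $z=A^{-1}\tilde{A}y$ gives $y^{T}\frac{d\tilde{A}}{d\epsilon}|_{0}y=z^{T}Sz$. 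Substituting into the quotient expansion with $a/c=y^{T}\tilde{A}y/(y^{T}Ay)=1-\rho(A)$ and collecting terms produces \eqref{eqkejdkddjdj}.

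For \eqref{eqlkejdhkdhjkdh}, I would redo the computation directly in the variable $\Theta$ rather than chain-ruling through $A=\Theta^{-1}$ (the direct route is cleaner because $\tilde{A}$ now depends only on a single inverse). The denominator is $y^{T}\Theta^{-1}y$, whose first-order perturbation contributes $-y^{T}\Theta^{-1}T\Theta^{-1}y=-\hat{y}^{T}T\hat{y}$. The numerator $\tilde{A}(\Theta)=\pi^{T}(\pi\Theta\pi^{T})^{-1}\pi$ has derivative $-\pi^{T}(\pi\Theta\pi^{T})^{-1}\pi T\pi^{T}(\pi\Theta\pi^{T})^{-1}\pi$, and contraction with $y$ yields $-\hat{z}^{T}T\hat{z}$ upon identifying $\hat{z}=\pi^{T}(\pi\Theta\pi^{T})^{-1}\pi y$. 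Plugging these into the quotient expansion, using the identity $y^{T}\Theta^{-1}y=\hat{y}^{T}\Theta\hat{y}$ for the denominator, and observing $y^{T}\tilde{A}y/(y^{T}\Theta^{-1}y)=1-\rho(\Theta)$ delivers \eqref{eqlkejdhkdhjkdh}; the overall sign flip relative to the $A$-version comes from the extra minus sign introduced by differentiating $\Theta^{-1}$.

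There is no genuine obstacle here: the proposition is essentially a clean exercise in matrix calculus. The only thing to watch is the bookkeeping in Step 2 — correctly identifying the double-inverse derivative of $\tilde{A}(A)$ and recognizing the resulting expression as a squared quadratic form $z^{T}Sz$ (respectively $\hat{z}^{T}T\hat{z}$) via the definitions of $z$ and $\hat{z}$. The consistency check $\hat{z}=\Theta^{-1}z=\tilde{A}y$ noted in the footnote is a useful sanity verification that the two perturbation formulas are compatible under the chain rule $T\leftrightarrow -ASA$.
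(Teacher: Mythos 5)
Your proposal is correct and follows essentially the same route as the paper's proof, which likewise writes $\rho(A+\epsilon S)=1-\frac{y^T\pi^T(\pi(A+\epsilon S)^{-1}\pi^T)^{-1}\pi y}{y^T(A+\epsilon S)y}$, expands $(A+\epsilon S)^{-1}=A^{-1}-\epsilon A^{-1}SA^{-1}+\mathcal{O}(\epsilon^2)$ (and similarly for $\Theta$), and finishes by ``straightforward calculus.'' You have simply made explicit the double-inverse derivative $\tilde{A}A^{-1}SA^{-1}\tilde{A}$ and the quotient-rule bookkeeping that the paper leaves to the reader, so there is nothing to add.
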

\begin{proof}
Observe that
\begin{equation}
\rho(A+\epsilon S)=1-\frac{y^T \pi^T (\pi (A+\epsilon S)^{-1}\pi^T)^{-1}\pi y}{y^T (A+\epsilon S) y}
\end{equation}
and recall the approximation
\begin{equation}\label{eqhgjhgjh}
(A+\epsilon S)^{-1}=A^{-1}-\epsilon A^{-1}S A^{-1}+\mathcal{O}(\epsilon^2)\,.
\end{equation}
 \eqref{eqkejdkddjdj} then follows from straightforward calculus. The proof of \eqref{eqlkejdhkdhjkdh} is identical and can also be obtained from \eqref{eqkejdkddjdj} and the first order approximation $(\Theta+\epsilon T)^{-1}=\Theta^{-1}-\epsilon \Theta^{-1}T \Theta^{-1}+\mathcal{O}(\epsilon^2)$.
\end{proof}

\section{The algorithm with a parametric family of kernels}\label{secfamker}
Let $\cW$ be a finite dimensional linear space and let
 $K(x,x',W)$ be a family of kernels parameterized by $W\in \cW$. Let $N_f\leq N$ and $N_c=\operatorname{round}(N_f/2)$.
Let $\s_f(1),\ldots,\s_f(N_f)$ be a selection of $N_f$ distinct elements of $\{1,\ldots,N\}$. Let $\s_{c}(1),\ldots,\s_c(N_c)$ be a selection of $N_c$ distinct elements of $\{1,\ldots,N_f\}$.
Let $\pi$ be the  corresponding $N_c\times N_f$ sub-sampling matrix defined by
$ \pi_{i,j}=\delta_{\s_c(i), j}$. Let $y_f\in \R^{N_f}$ and $y_c\in \R^{N_c}$ be the corresponding subvectors of $y$ defined by
$y_{f,i}=y_{\s_f(i)}$ and $y_{c,i}=y_{f,\s_c(i)}$.

 Using the notations of sections  \ref{sec2} and
\ref{sec3} write $\Theta(W)$ for the $N_f\times N_f$ matrix with entries $\Theta_{i,j}=K(x_{s_f(i)},x_{s_f(j)},W)$ and let
\begin{equation}\label{eqjhgjgyuyguy0}
\rho(W,\s_f,\s_c):=1-\frac{ y_c^T(\pi\Theta \pi^T)^{-1} y_c}{y_f^T \Theta^{-1} y_f}\,.
\end{equation}

The following corollary derived from  Prop.~\ref{propekjdhgud} allows us to compute the gradient of $\rho$ respect to $W$.
\begin{Corollary}\label{corpropekjdhgud}
Write $\Theta:=\Theta(W)$, $\hat{y}:=\Theta^{-1} y_f$ and $\hat{z}:=\pi^T (\pi \Theta \pi^T)^{-1}\pi y_f$.  Write $W_i$ for the entries of the vector $W$. It holds true that
\begin{equation}\label{eqlkejdhkdhjkdh2}
\partial_{W_i} \rho(W)=-\frac{(1-\rho(W)) \hat{y}^T (\partial_{W_i} \Theta(W)) \hat{y}  - \hat{z}^T (\partial_{W_i} \Theta(W)) \hat{z} }{y_f^T \Theta^{-1} y_f}\,.
\end{equation}
\end{Corollary}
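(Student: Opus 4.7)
The plan is to reduce the corollary to a direct application of Proposition \ref{propekjdhgud} via the chain rule, since the function $\rho(W,\s_f,\s_c)$ defined in \eqref{eqjhgjgyuyguy0} depends on $W$ only through the Gram matrix $\Theta(W)$. Write $\rho(W)=\rho(\Theta(W))$ (with $\s_f,\s_c$ held fixed), and let $e_i$ denote the $i$-th standard basis vector of $\cW$. The first step is the Taylor expansion
\begin{equation*}
\Theta(W+\epsilon e_i)=\Theta(W)+\epsilon\,\partial_{W_i}\Theta(W)+\mathcal{O}(\epsilon^2),
\end{equation*}
which is just the definition of the partial derivative of the smooth matrix-valued map $W\mapsto \Theta(W)$.

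Next I would plug $T:=\partial_{W_i}\Theta(W)$ into \eqref{eqlkejdhkdhjkdh} from Proposition \ref{propekjdhgud}. This immediately gives
\begin{equation*}
\rho(W+\epsilon e_i)-\rho(W)=-\epsilon\,\frac{(1-\rho(W))\,\hat{y}^T(\partial_{W_i}\Theta(W))\hat{y}-\hat{z}^T(\partial_{W_i}\Theta(W))\hat{z}}{\hat{y}^T\Theta\hat{y}}+\mathcal{O}(\epsilon^2),
\end{equation*}
with the vectors $\hat{y}=\Theta^{-1}y_f$ and $\hat{z}=\pi^T(\pi\Theta\pi^T)^{-1}\pi y_f$ exactly as in the statement. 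Dividing by $\epsilon$ and sending $\epsilon\to 0$ yields $\partial_{W_i}\rho(W)$.

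The only cosmetic point to address is that Proposition \ref{propekjdhgud} writes the denominator as $\hat{y}^T\Theta\hat{y}$, whereas the corollary writes it as $y_f^T\Theta^{-1}y_f$. These agree: substituting $\hat{y}=\Theta^{-1}y_f$ gives
\begin{equation*}
\hat{y}^T\Theta\hat{y}=y_f^T\Theta^{-1}\Theta\,\Theta^{-1}y_f=y_f^T\Theta^{-1}y_f,
\end{equation*}
and the formula matches \eqref{eqlkejdhkdhjkdh2} verbatim.

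There is no real obstacle here; the corollary is essentially Proposition \ref{propekjdhgud} pulled back through the chain rule, with a one-line identity to reconcile the form of the denominator. The only tacit assumption is that the parametrization $W\mapsto K(x,x',W)$ is differentiable in $W$ at the sample points, so that $\partial_{W_i}\Theta(W)$ exists entrywise; this is implicit in the statement.
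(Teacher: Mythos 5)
Your proposal is correct and follows essentially the same route as the paper: apply Proposition \ref{propekjdhgud} with the perturbation $T=\partial_{W_i}\Theta(W)$ obtained from the first-order expansion of $W\mapsto\Theta(W)$ (the paper phrases this with a general direction $W'$ and $T=(W')^T\nabla_W\Theta(W)$, which reduces to your version when $W'=e_i$). The denominator identity $\hat{y}^T\Theta\hat{y}=y_f^T\Theta^{-1}y_f$ that you spell out is used silently in the paper, so your write-up is, if anything, slightly more complete.
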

\begin{proof}
Prop.~\ref{propekjdhgud} implies that
\begin{equation}\label{eqlkejdhkdhjkdh3}
\rho(W+\epsilon W')=\rho(W)-\epsilon \frac{(1-\rho(W)) \hat{y}^T T \hat{y}  - \hat{z}^T T \hat{z} }{y_f^T \Theta^{-1} y_f}+\mathcal{O}(\epsilon^2)\,,
\end{equation}
with  $T=(W')^T \nabla_W \Theta(W)$, which proves the result.
\end{proof}

The purpose of Algorithm \ref{alglearnkernel} is to learn the parameters $W$ (of the kernel $K$) from the data.
The value of $N_f$ (and hence $N_c$) corresponds to the size of a batch.
The initialization of
$W$ in step \ref{step1} may be problem dependent or at random.

\begin{algorithm}[h]
\caption{Learning $W$ in the $K(\cdot,\cdot,W)$.}\label{alglearnkernel}
\begin{algorithmic}[1]
\STATE\label{step1} Initialize $W$
\REPEAT
\STATE Select $\s_f(1),\ldots,\s_f(N_f)$ out of $\{1,\ldots,N\}$.
\STATE Select $\s_c(1),\ldots,\s_c(N_c)$ out of $\{1,\ldots,N_f\}$.
\STATE $W'=-\nabla_W \rho(W,\s_f,\s_c)$
\STATE  $W=W +\epsilon W'$
\UNTIL End criterion
\end{algorithmic}
\end{algorithm}

\section{A simple PDE model}\label{secjhdkhj33}
To motivate, illustrate and study the proposed approach, it is useful to start with an application to the following simple PDE model amenable to detailed analysis \cite{OwhScobook2018}.
Let $u$ be the solution of
\begin{equation}\label{eqnscalarhgyprotgtoa}
\begin{cases}
    -\diiv \big(a(x)  \nabla u(x)\big)=f(x) \quad  x \in \Omega;  \\
    u=0 \quad \text{on}\quad \partial \Omega\,,
    \end{cases}
\end{equation}
where $\Omega\subset \R^d$,  is a regular subset  and $a$ is a uniformly elliptic  symmetric matrix with entries in $L^\infty(\Omega)$.
Write $\L:=-\diiv(a\nabla\cdot)$ for the corresponding linear bijection from $\H^1_0(\Omega)$ to $\H^{-1}(\Omega)$.

In this proposed simple application we seek to recover the solution of  \eqref{eqnscalarhgyprotgtoa} from the data $(x_i,y_i)_{1\leq i \leq N}$ and the information $u(x_i)=y_i$.
If the conductivity $a$ is known  then \cite{OwhScobook2018, gamblet17} interpolating the data with the kernel
(1) $\L^{-1}$  leads to a recovery that is minimax optimal in the (energy) norm $\|u\|^2=\int_{\Omega}(\nabla u)a \nabla u$ ($d=1$ is required to ensure the continuity of the kernel).
(2) $(\L^T \L )^{-1}$ leads to a recovery that is minimax optimal in the  norm $\|u\|=\|\diiv(a\nabla u)\|_{L^2(\Omega)}$ ($d\leq 3$, the recovery is equivalent to interpolating with Rough Polyharmonic Splines \cite{OwhadiZhangBerlyand:2014}).
 (3) $(\L^T \Delta \L )^{-1}$ leads to a recovery that is minimax optimal in the  norm $\|u\|=\|\diiv(a\nabla u)\|_{\H^1_0(\Omega)}$ ($d\leq 5$).
  \begin{figure}[h]
\begin{center}
\includegraphics[width= \textwidth]{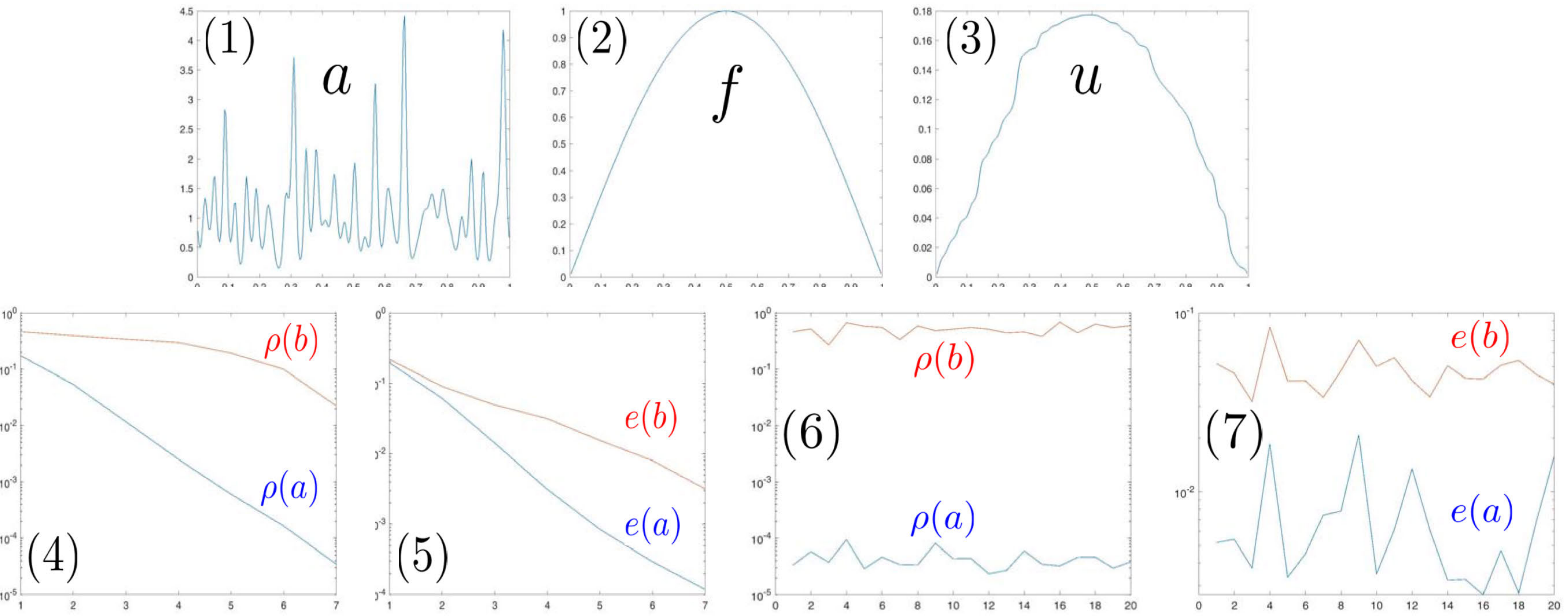}
\caption{(1) $a$ (2) $f$ (3) $u$ (4) $\rho(a)$ and $\rho(b)$ vs $k$, geometric (5)  $e(a)$ and $e(b)$ vs $k$, geometric
(6) $\rho(a)$ and $\rho(b)$ vs $k$, random  (5)  $e(a)$ and $e(b)$ vs random realization.}
\label{figa}
\end{center}
\end{figure}

 Which kernel should be used for the recovery of $u$ when the conductivity $a$ is unknown? Consider the case when
 $d=1$ and $\Omega$ is the interval $(0,1)$. For $b\in \L^\infty(\Omega)$ with $\operatorname{essinf}_{\Omega}(b)>0$
  write $G_b$ for the Green's function of the operator $-\diiv(b\nabla \cdot)$ mapping
 $\H^1_0(\Omega)$ to $\H^{-1}(\Omega)$. Observe that the $\{G_b|b\}$ is a set of kernels parameterized $b$ and any kernel in that set could be used to interpolate the data. Which one should we pick? The answer proposed in Sec. \ref{sec3} and \ref{secfamker} is to
 use the ordering induced by $\rho$ to select the kernel.

 \paragraph{Fig.~\ref{figa}}
 provides a numerical illustration of that ordering. In that example $\Omega$ is discretized over $2^8$ equally spaced interior points (and piecewise linear tent finite elements) and Fig.~\ref{figa}.1-3 shows $a$, $f$ and $u$.  For $k\in \{1,\ldots,8\}$ and $i\in \I^{(k)}:=\{1,\ldots, 2^{k}-1\}$ let $x_i^{(k)}=i/2^k$ and write $v^{(k)}_b$ for the interpolation of the data $(x_i^{(k)},u(x_i^{(k)}))_{i\in \I^{(k)}}$ using the kernel $G_b$ (note that $v^{(8)}_b=u$).
Let $\|v\|_b$ be the energy norm $\|v\|^2_b=\int_{\Omega}(\nabla v)^Tb \nabla v$.
Take $b\equiv 1$. Fig.~\ref{figa}.4 shows (in semilog scale) the values of
$\rho(a)=\frac{\|v^{(k)}_a-v^{(8)}_a\|_a^2}{\|v^{(8)}_a\|_a^2}$ and $\rho(b)=\frac{\|v^{(k)}_b-v^{(8)}_b\|_b^2}{\|v^{(8)}_b\|_b^2}$ vs $k$.
Note that the value of ratio $\rho$ is much smaller when the kernel $G_a$ is used for the interpolation of the data. The geometric decay $\rho(a)\leq C 2^{-2k} \frac{\|f\|_{L^2(\Omega)}}{\|u\|_a^2}$ is well known and has been extensively studied in  Numerical Homogenization \cite{OwhScobook2018}.

Fig.~\ref{figa}.5 shows (in semilog scale) the values of the average prediction errors $e(a)$ and $e(b)$ (vs $k$) defined (after normalization) to be proportional to
$\|v^{(k)}_a(x)-u(x)\|_{L^2(\Omega)}$ and $\|v^{(k)}_b(x)-u(x)\|_{L^2(\Omega)}$.
Note again that the prediction error is much smaller when the kernel $G_a$ is used for the interpolation.

Now let us consider the case where the interpolation points form a random subset of the discretization points.
Take $N_f=2^7$ and $N_c=2^6$. Let
 $X=\{x_1,\ldots,x_{N_f}\}$ be a subset $N_f$ distinct points of (the discretization points) $\{i/2^{8}|i\in \I^{(8)}\}$ sampled with uniform distribution. Let $Z=\{z_1,\ldots,z_{N_c}\}$ be a subset of $N_c$ distinct points of $X$ sampled with uniform distribution.
Write $v^f_b$ for the interpolation of the data $(x_i,u(x_i))$ using the kernel $G_b$ and write
 $v^c_b$ for the interpolation of the data $(z_i,u(z_i))$ using the kernel $G_b$.
Fig.~\ref{figa}.6 shows in (semilog scale) $20$ independent random realizations of the values of
$\rho(a)=\|v^f_a-v^c_a\|_a^2/\|v^f_a\|_a^2$ and $\rho(b)=\|v^f_b-v^c_b\|_b^2/\|v^f_b\|_b^2$.
Fig.~\ref{figa}.7 shows in (semilog scale) $20$ independent random realizations of the values of the prediction errors
$e(a)\propto\|u-v^c_a\|_{L^2(\Omega)}$ and $e(b)\propto\|u-v^c_b\|_{L^2(\Omega)}$. Note again that the values of $\rho(a), e(a)$ are consistently and significantly lower than those of $\rho(b), e(b)$.

\begin{figure}[h]
\begin{center}
\includegraphics[width= \textwidth]{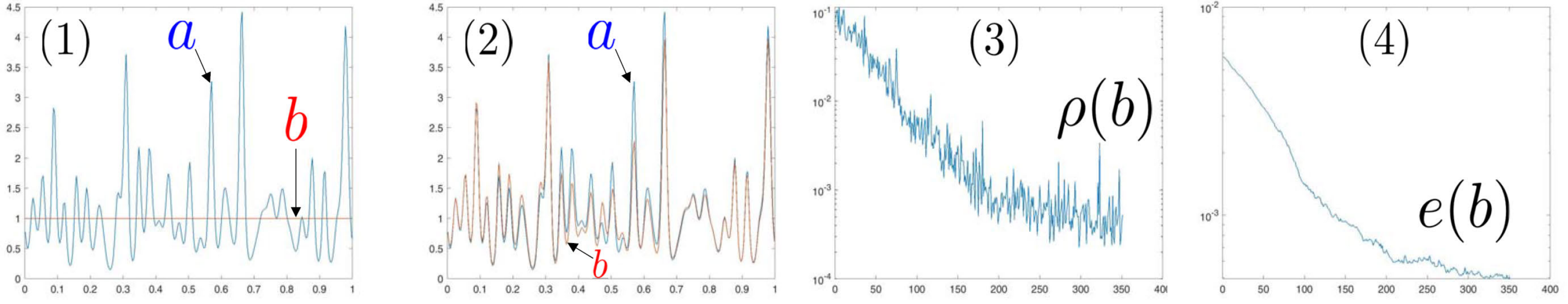}
\caption{(1) $a$ and $b$ for $n=1$ (2) $a$ and $b$ for $n=350$ (2)  $\rho(b)$ vs $n$ (4) $e(b)$ vs $n$.}
\label{figaguess}
\end{center}
\end{figure}

 \paragraph{Fig.~\ref{figaguess}} provides a numerical illustration of an implementation of Alg.~\ref{alglearnkernel} with
 $N_f=N=2^7$, $N_c=2^6$ and $n_c=1$.
  In this implementation $a, f$ and $u$ are as in Fig.~\ref{figa}.1-3. The training data corresponds to $N_f$ points
  $X=\{x_1,\ldots,x_{N_f}\}$ uniformly sampled (without replacement) from $\{i/2^{8}|i\in \I^{(8)}\}$ (Since $N=N_f$ these points remain fixed during the execution the of the algorithm).
  $n$. The purpose of the algorithm is to learn the kernel $G_a$ in the set of kernels $\{G_{b(W)}|W\}$ parameterized by the vector $W$ via
  \begin{equation}
  \log b(W)=\sum_{i=1}^{2^6} (W^c_i \cos(2\pi i x )+ W^s_i \sin(2\pi i x))\,.
  \end{equation}
  Using $n$ to label its progression, Alg.~\ref{alglearnkernel} is initialized at $n=1$ with the guess $b\equiv 1$ (i.e. $W\equiv 0$) (Fig.~\ref{figaguess}.1). At each step ($n\rightarrow n+1$) the algorithm performs the following operations:
  \begin{enumerate}
  \item Select $N_c$ points
  $Z=\{z_1,\ldots,z_{N_c}\}$ uniformly sampled (without replacement) from $X$.
  \item Write $v^f_b$ and $v^c_f$ for the interpolation of the data $(x_i,u(x_i))$ and $(z_i,u(z_i))$ using the kernel $G_b$,
  and $\rho(W)=\|v^f_b-v^c_b\|_b^2/\|v^f_b\|_b^2$. Compute the gradient $\nabla_W \rho(W)$ using Cor.~\ref{corpropekjdhgud} (and
  the identity $\partial_{W_i}\Theta(W)=   -\pi_0 (A_0(W))^{-1} \partial_{W_i} A_0(W)   (A_0(W))^{-1} \pi_0^T $ for
  $\Theta(W)=   \pi_0 (A_0(W))^{-1} \pi_0^T $).
  \item Update $W\rightarrow W-\lambda \nabla_W \rho(W)$ (with $\lambda\propto 0.01/\|\nabla_W \rho(W)\|_{L^2}$).
  \end{enumerate}
  Fig.~\ref{figaguess}.2 shows the value of $b$ for $n=350$. Fig.~\ref{figaguess}.3 shows the value of $\rho(b)$  vs $n$. Fig.~\ref{figaguess}.4 shows the value of the  prediction error $e(b)\propto\|u-v^c_b\|_{L^2(\Omega)}$ vs $n$. The lack of smoothness of the plots of $\rho(b), e(b)$ vs $n$ originate from the re-sampling of the set $Z$ at each step $n$.

  \begin{figure}[h]
\begin{center}
\includegraphics[width= \textwidth]{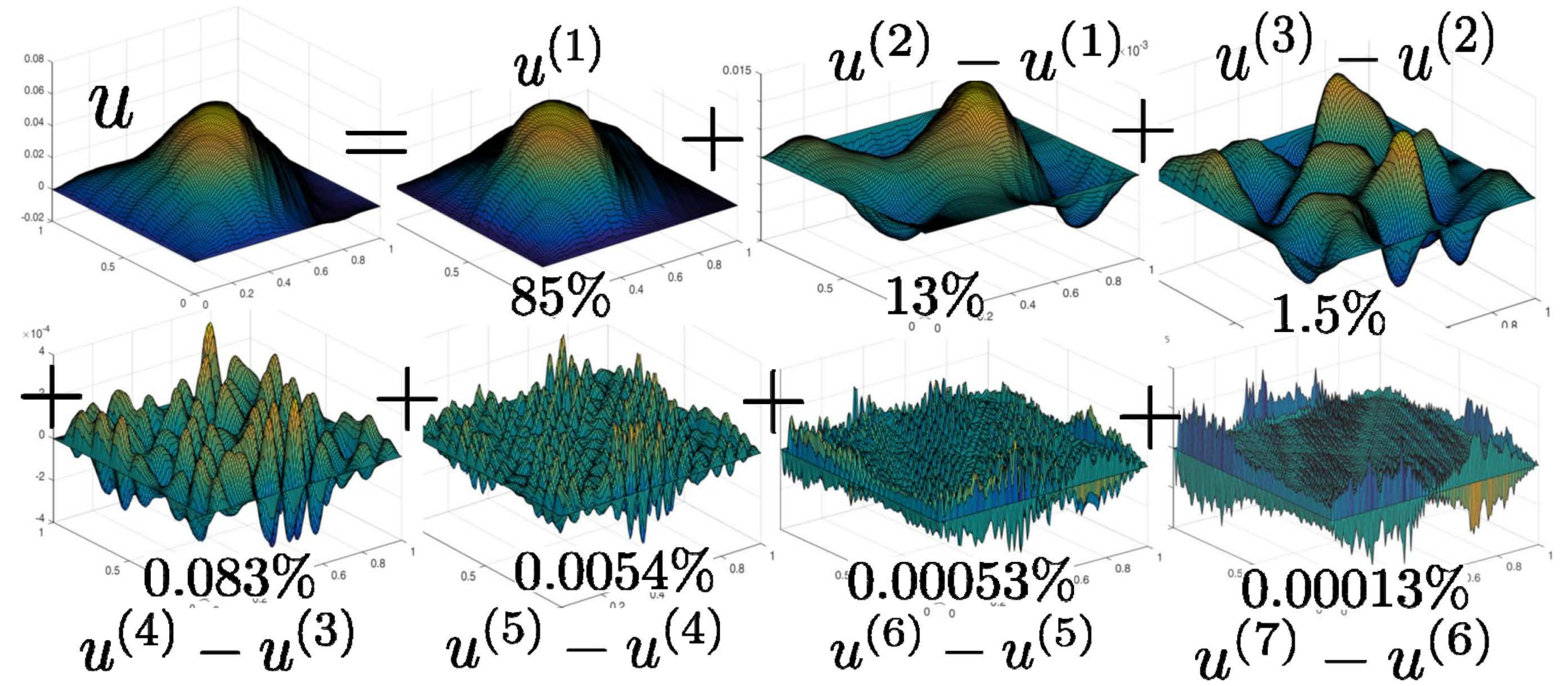}
\caption{$u$ and $u^{(k)}-u^{(k-1)}$. Number below sub-figures show relative energy content $\|u^{(k)}-u^{(k-1)}\|^2/\|u\|^2$.   Used from forthcoming book \cite{OwhScobook2018} with permission from Cambridge University Press.}
\label{figu7}
\end{center}
\end{figure}

\begin{Remark}
For $d\geq 1$, let $\Omega=(0,1)^d$ and for $k\geq 1$, let
$\tau_i^{(k)}$ be a nested (in $k$) hierarchy of sub-cubes of $(0,1)^d$  with locations indexed by $i$. For $\L=-\diiv(a\nabla)$ let $\xi\sim \cN(0,\L^{-1})$ and let $u^{(k)}:=\E[\xi\mid \int_{\tau_i^{(x)}}\xi = \int_{\tau_i^{(x)}}\xi u \text{ for all }i]$.
Fig.~\ref{figu7} shows (for $d=2$) the corresponding increments $u^{(k)}-u^{(k-1)}$ and the relative energy content $\|u^{(k)}-u^{(k-1)}\|^2/\|u\|^2$ of each increment for a solution of \eqref{eqnscalarhgyprotgtoa} with $f\in L^2(\Omega)$.
The quick decay of $\|u^{(k)}-u^{(k-1)}\|^2/\|u\|^2 $  with respect to $k$ illustrates the accuracy of the Green's function of
\eqref{eqnscalarhgyprotgtoa} used as a kernel for  interpolating partial linear measurements made on solutions of \eqref{eqnscalarhgyprotgtoa}. This numerical homogenization phenomenon \cite{OwhScobook2018, OwhadiMultigrid:2015} is one motivation for minimizing $\rho$ in the kernel identification problem described above ($\|u^{(k)}-u^{(k-1)}\|^2/\|u\|^2 \approx \rho_k$ with $\rho_k:=\|u^{(k)}-u^{(k-1)}\|^2/\|u^{(k)}\|^2$).
\end{Remark}

\section{Kernel Flows (KF)}\label{secKernelFlow}


\subsection{Non parametric family of kernels and bottomless networks without guesswork}
Composing a symmetric positive kernel with a function produces a symmetric positive kernel \cite{cristianini2000introduction}.
We will now use this property to learn a kernel from the data within a non-parametric family of kernels constructed by composing layers of functions.
For $n\geq 2$ let $G_n\,:\,\X\rightarrow \X$ ($\X$ is the input space mentioned Pb.~\ref{pb1}) be a sequence of functions determining the layers of this network. Let $\epsilon>0$ be a small parameter, let $F_1:=I_d$ be the identity function and for $n\geq 2$, let $F_n$ be the sequence of functions inductively defined by
\begin{equation}\label{eqklejdhkdj}
F_{n+1}=(I_d+\epsilon G_{n+1})\circ F_n\,.
\end{equation}
Let $K_n$ be the sequence of symmetric positive kernels obtained by composing  a kernel $K_1$ with this sequence of functions, i.e. $K_n(x,x')=K_1(F_n(x),F_n(x'))$ and
\begin{equation}\label{eqlkedjdldh}
K_{n+1}(x,x')=K_n\big(x+\epsilon G_{n+1}(x),x'+\epsilon G_{n+1}(x')\big)\,.
\end{equation}
Our purpose is to use the training data
$(x_1,y_1),\ldots,(x_N,y_N)\in \X \times \Y$ to learn the functions $G_1,\ldots,G_{n^*}$ and then
approximate $u^\dagger$ with $u_{n^*}$ obtained by interpolating a subset of the training data with $K_{n^*}$.

When applied to a classification problem with $n=1$ the proposed algorithm is a support-vector network  (in the sense of \cite{cortes1995support}) with kernel $K_1$. As $n$ progresses the algorithm incrementally modifies the kernel via small perturbations of the identity operator ($\epsilon G_{n+1}$ is reminiscent of the residual term of deep residual networks \cite{he2016deep}).
Since the training does not require any back propagation, achieving profound depths (with $10000$ layers or more) is not difficult (since training is akin to simulating a stochastic dynamical system the network is essentially bottomless) and one purpose of this section is to explore  properties of such bottomless networks (see \cite{huang2017densely} for a review of the motivations/challenges associated with the exploration of very deep networks).

\subsection{The algorithm}\label{refdklejdhhdkj}
We will adapt Algorithm \ref{alglearnkernel} to learn the functions $G_1,\ldots,G_{n},\ldots$ by induction over $n$.
As in Sec.~\ref{secfamker} let  $N_f\leq N$ and $N_c=\operatorname{round}(N_f/2)$.
\paragraph{ For $n=1$} let $x^{(n)}_i:=x_i$ for $i\in \{1,\ldots,N\}$.

\paragraph{Let $n\geq 1$.} Assume $x^{(n)}_1,\ldots,x^{(n)}_N$ to be known.
Let $\s_{f,n}(1),\ldots,\s_{f,n}(N_f)$ be  $N_f$ distinct elements of $\{1,\ldots,N\}$ obtained through random sampling (with uniform distribution) without replacement. Let $\s_{c,n}(1),\ldots,\s_{c,n}(N_c)$ be  $N_c$ distinct elements of $\{1,\ldots,N_f\}$ also obtained through random sampling (with uniform distribution) without replacement. Let $\pi$ be the  corresponding $N_c\times N_f$ sub-sampling matrix defined by
$ \pi_{i,j}^{(n)}=\delta_{\s_{c,n}(i), j}$. Let $y_f^{(n)}\in \R^{N_f}$ and $y_c^{(n)}\in \R^{N_c}$ be the corresponding subvectors of $y$ defined by
$y_{f,i}^{(n)}=y_{\s_{f,n}(i)}$ and $y_{c,i}^{(n)}=y_{f,\s_{c,n}(i)}$.
For $i\in \{1,\ldots,N_f\}$ let
 $x^{(n)}_{f,i}:=x_{\s_{f,n}(i)}^{(n)}$ and
write $\Theta^{(n)}$ for the $N_f\times N_f$ matrix with entries
 \begin{equation}
\Theta^{(n)}_{i,j}=K_1(x^{(n)}_{f,i},x^{(n)}_{f,j})\,,
\end{equation}
and let
\begin{equation}\label{eqjhgjgyuyguy}
\rho(n):=1-\frac{ (y_c^{(n)})^T(\pi^{(n)}\Theta^{(n)} (\pi^{(n)})^T)^{-1} y_c^{(n)}}{(y_f^{(n)})^T (\Theta^{(n)})^{-1} y_f^{(n)}}\,.
\end{equation}
Let $\hat{y}^{(n)}_f:=(\Theta^{(n)})^{-1} y_f^{(n)}$,  $\hat{z}^{(n)}_f:= (\pi^{(n)})^T (\pi^{(n)}\Theta^{(n)}(\pi^{(n)})^T)^{-1}\pi^{(n)} y_f^{(n)}$
and for $i\in \{1,\ldots,N_f\}$ let
\begin{equation}\label{eqkjjdjkhedhd}
\begin{split}
\hat{g}^{(n)}_{f,i}:=2\frac{(1-\rho(n)) \hat{y}^{(n)}_{f,i}  (\nabla_x K_1)(x_{f,i}^{(n)},x_{f,\cdot}^{(n)}) \hat{y}^{(n)}_f
 - \hat{z}^{(n)}_{f,i}  (\nabla_x K_1)(x_{f,i}^{(n)},x_{f,\cdot}^{(n)}) \hat{z}^{(n)}_f}{y_f^T (\Theta^{(n)})^{-1} y_f}
\end{split}
\end{equation}
Let $G_{n+1}$ be the function obtained by interpolating the data $(x^{(n)}_{f,i}, \hat{g}^{(n)}_{f,i})$ with the kernel $K_1$, i.e.
\begin{equation}\label{eqjhgyuygygtft}
G_{n+1}(x)= (\hat{g}_{f,\cdot}^{(n)})^T \big(K_1(x_{f,\cdot}^{(n)},x_{f,\cdot}^{(n)})\big)^{-1} K_1(x_{f,\cdot}^{(n)},x)\,.
\end{equation}
Note that $G^{(n+1)}(x_{f,i}^{(n)})=\hat{g}^{(n)}_{f,i}$.
For $i\in \{1,\ldots,N\}$, let
\begin{equation}
x^{(n+1)}_i=x^{(n)}_i+\epsilon G_{n+1}(x^{(n)}_i)\,.
\end{equation}

\begin{Remark}\label{rmkksjhddg7}
In the description above the input space $\X$ (of the function $u$ to be interpolated) is assumed to a finite-dimensional vector space and the output space $\Y$ (of the function $u$ to be interpolated) is assumed to be contained in the real line $\R$.
If the output space $\Y$ is a finite-dimensional vector space (e.g. $\R^{d_\Y}$) then
$y_{f}^{(n)}$ and $y_{c}^{(n)}$ are $N_f\times d_{\Y}$ and $N_c\times d_{\Y}$ matrices and
 \eqref{eqjhgjgyuyguy} and \eqref{eqkjjdjkhedhd} must be replaced by
 \begin{equation}\label{eqjhgjgyuyguymod}
\rho(n):=1-\frac{ \Tr\big[(y_c^{(n)})^T(\pi^{(n)}\Theta^{(n)} (\pi^{(n)})^T)^{-1} y_c^{(n)}\big]}{\Tr\big[(y_f^{(n)})^T (\Theta^{(n)})^{-1} y_f^{(n)}\big]}\,.
\end{equation}
 and
\begin{equation}\label{eqkjjdjkhedhdmod}
\begin{split}
\hat{g}^{(n)}_{f,i}:=2\frac{\Tr\big[(1-\rho(n)) \hat{y}^{(n)}_{f,i}  (\nabla_x K_1)(x_{f,i}^{(n)},x_{f,\cdot}^{(n)}) \hat{y}^{(n)}_f
 - \hat{z}^{(n)}_{f,i}  (\nabla_x K_1)(x_{f,i}^{(n)},x_{f,\cdot}^{(n)}) \hat{z}^{(n)}_f\big]}{\Tr\big[y_f^T (\Theta^{(n)})^{-1} y_f\big]}\,,
\end{split}
\end{equation}
where, in \eqref{eqkjjdjkhedhdmod}, $\hat{y}^{(n)}_{f,i}\in \R^{d_{\Y}}$, $\hat{y}^{(n)}_f\in \R^{N_f\times d_{\Y}}$,
$(\nabla_x K_1)(x_{f,i}^{(n)},x_{f,\cdot}^{(n)})\in \R^{d_{\X}\times N_f}$ (writing $d_\X$ for the dimension of the input space, $\nabla_x K_1$ refers to the gradient over the first component of $K_1$). The product results in a $d_\Y \times d_\X  \times d_{\Y}$ tensor and the trace  (taken with respect to the $d_Y$ dimensions) results in a vector in $\R^{d_\X}$, i.e. writing $x_s$ for the $s$th entry of $x$,
$
(\hat{g}^{(n)}_{f,i})_s=2\frac{\sum_{l=1}^{d_\Y}\sum_{t=1}^{N_f} (1-\rho(n)) (\hat{y}^{(n)}_{f})_{i,l}  \partial_{x_s} K_1(x_{f,i}^{(n)},x_{f,t}^{(n)}) (\hat{y}^{(n)}_{f})_{t,l}
 - (\hat{z}^{(n)}_{f})_{i,l}  \partial_{x_s} K_1(x_{f,i}^{(n)},x_{f,t}^{(n)}) (\hat{z}^{(n)}_f)_{t,l}}{\Tr\big[y_f^T (\Theta^{(n)})^{-1} y_f\big]}
$.

This simple modification (via the trace operator) is equivalent to endowing the space of functions $v=(v_1,\ldots,v_{d_\Y})$ mapping $\X$ to $\Y$ with the RKHS norm
$\|v\|^2=\sum_{i=1}^{d_\Y} \|v_i\|^2 $ with $\|v_i\|^2= \sup_{\phi} \frac{(\int_{\X} \phi(x) v_i(x)\,dx)^2}{\int_{\X^2} \phi(x)K_n(x,x') \phi(x')\,dx\,dx' } $ and using that norm to compute $\rho$ and its  gradient.
\end{Remark}

\subsection{Rationale of the algorithm}\label{secrat}
Observe that the algorithm is randomized through the random samples $\s_{f,n}(i)$ and $\s_{c,n}(i)$ (taking values in the training data).
Observe also that, given those random samples, the functions $G_n, F_n$ and kernels $K_n$ are entirely determined by the values of the learning parameters $\hat{g}^{(n)}_{f,i}=G^{(n+1)}(x_{f,i}^{(n)})$.

As in Alg.~\ref{alglearnkernel},  the $\hat{g}^{(n)}_{f,i}$  are selected in \eqref{eqkjjdjkhedhd} in the direction of the  gradient descent of the ratio $\rho$: we  apply Prop.~\ref{propekjdhgud} to compute the Frech\'{e}t derivative of $\rho(n)$ \eqref{eqjhgjgyuyguy} with respect to small perturbations $x^{(n)}_{f,i}+\epsilon g^{(n)}_{f,i}$ to the  $x^{(n)}_{f,i}$ and select
the $g^{(n)}_{f,i}$ in the  direction of the  gradient descent.

More precisely let $g^{(n)}_{f,i}$ be candidates for the values of
$G^{(n+1)}(x_{f,i}^{(n)})$ and write
$\tilde{x}^{(n+1)}_{f,i}=x^{(n)}_{f,i}+\epsilon g^{(n)}_{f,i}$. Write $\tilde{\Theta}^{(n+1)}$ for the $N_f\times N_f$ matrix with entries
 \begin{equation}
\tilde{\Theta}^{(n+1)}_{i,j}=K_1(\tilde{x}^{(n+1)}_{f,i},\tilde{x}^{(n+1)}_{f,j})\,,
\end{equation}
and let
\begin{equation}\label{eqjhgjgyuyguy2}
\tilde{\rho}(n+1):=1-\frac{ (y_c^{(n)})^T(\pi^{(n)} \tilde{\Theta}^{(n+1)} (\pi^{(n)})^T)^{-1} y_c^{(n)}}{(y_f^{(n)})^T (\tilde{\Theta}^{(n+1)})^{-1} y_f^{(n)}}\,.
\end{equation}
Then the following proposition identifies $\hat{g}^{(n)}_{f,i}$ as the direction of the  gradient descent of $\tilde{\rho}(n+1)$ with respect to the parameters $g^{(n)}_{f,i}$.

\begin{Proposition}\label{propkjdhejdh}
It holds true that
\begin{equation}
\tilde{\rho}(n+1)=\rho(n)-\epsilon \sum_{i=1}^{N_f} (g^{(n)}_{f,i})^T \hat{g}^{(n)}_{f,i} +\mathcal{O}(\epsilon^2)\,.
\end{equation}
where the $\hat{g}^{(n)}_{f,i}$ are as in \eqref{eqkjjdjkhedhd}.
\end{Proposition}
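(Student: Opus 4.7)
The plan is to reduce the statement to a direct application of formula \eqref{eqlkejdhkdhjkdh} from Proposition \ref{propekjdhgud}, applied to the matrix perturbation $\tilde{\Theta}^{(n+1)} - \Theta^{(n)}$ induced by the displacement of the interpolation points.

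First I would Taylor-expand the kernel entries. Writing $T_{i,j}$ for the first-order coefficient in $\tilde{\Theta}^{(n+1)}_{i,j} = \Theta^{(n)}_{i,j} + \epsilon T_{i,j} + \mathcal{O}(\epsilon^2)$, a direct expansion gives
\begin{equation*}
T_{i,j} = (g^{(n)}_{f,i})^T (\nabla_x K_1)(x^{(n)}_{f,i},x^{(n)}_{f,j}) + (g^{(n)}_{f,j})^T (\nabla_x K_1)(x^{(n)}_{f,j},x^{(n)}_{f,i}),
\end{equation*}
where I have used the symmetry $K_1(x,x')=K_1(x',x)$ to rewrite $(\nabla_{x'}K_1)(x,x') = (\nabla_x K_1)(x',x)$.

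Next I would plug this into \eqref{eqlkejdhkdhjkdh} with $\Theta=\Theta^{(n)}$, $\hat y = \hat y^{(n)}_f$, $\hat z = \hat z^{(n)}_f$. For any symmetric matrix $M$ one has $\hat y^T T \hat y = \sum_{i,j} \hat y_i T_{i,j} \hat y_j$, and the two terms in $T_{i,j}$ produce the same contribution upon swapping the dummy indices $i \leftrightarrow j$; hence
\begin{equation*}
\hat y^T T \hat y = 2 \sum_{i=1}^{N_f} (g^{(n)}_{f,i})^T \hat{y}^{(n)}_{f,i} (\nabla_x K_1)(x^{(n)}_{f,i},x^{(n)}_{f,\cdot}) \hat y^{(n)}_f,
\end{equation*}
and the analogous identity holds for $\hat z^T T \hat z$. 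Substituting these into \eqref{eqlkejdhkdhjkdh} yields
\begin{equation*}
\tilde\rho(n+1) = \rho(n) - \epsilon \sum_{i=1}^{N_f} (g^{(n)}_{f,i})^T \cdot 2\,\frac{(1-\rho(n))\,\hat y^{(n)}_{f,i}(\nabla_x K_1)(x^{(n)}_{f,i},x^{(n)}_{f,\cdot})\hat y^{(n)}_f - \hat z^{(n)}_{f,i}(\nabla_x K_1)(x^{(n)}_{f,i},x^{(n)}_{f,\cdot})\hat z^{(n)}_f}{y_f^T(\Theta^{(n)})^{-1}y_f} + \mathcal{O}(\epsilon^2),
\end{equation*}
and the bracketed vector is precisely $\hat g^{(n)}_{f,i}$ as defined in \eqref{eqkjjdjkhedhd}, which concludes the proof.

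There is no serious obstacle: the only points of care are (i) keeping track of the two gradient contributions (one from each argument of $K_1$) and using symmetry to collapse them into the factor of $2$ that appears in the definition of $\hat g^{(n)}_{f,i}$, and (ii) noting that Proposition \ref{propekjdhgud} was stated for a scalar label $y$, so if one works in the vector-valued output setting of Remark \ref{rmkksjhddg7} the same argument must be carried out component-wise and summed via the trace, reproducing \eqref{eqkjjdjkhedhdmod}.
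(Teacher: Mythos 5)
Your proposal is correct and follows essentially the same route as the paper's proof: expand $\tilde{\Theta}^{(n+1)}=\Theta^{(n)}+\epsilon T^{(n)}+\mathcal{O}(\epsilon^2)$ with the same two-gradient-term expression for $T^{(n)}_{i,j}$, then apply Proposition \ref{propekjdhgud}. The only difference is that you spell out the final ``simplification'' (the index swap producing the factor of $2$ in \eqref{eqkjjdjkhedhd}), which the paper leaves implicit.
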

\begin{proof}
We deduce from \eqref{eqlkedjdldh}
that, to the first order in $\epsilon$,
\begin{equation}\label{eqkhjeddgekjdh}
\begin{split}
K_{n+1}(x,x')\approx& K_n\big(x,x'\big)+\epsilon \big( (G_{n+1}\circ F_n(x))^T  (\nabla_x K_1)\big( F_n(x),F_n(x')
\\&+ (G_{n+1}\circ F_n(x'))^T  (\nabla_{x'} K_1)\big( F_n(x),F_n(x')\big)\,.
\end{split}
\end{equation}
Therefore, using the symmetry of $K_1$, we have
\begin{equation}
\tilde{\Theta}^{(n+1)}=\Theta^{(n)}+ \epsilon T^{(n)}+\mathcal{O}(\epsilon^2)
\end{equation}
with
\begin{equation}
T^{(n)}_{i,j}=(g^{(n)}_{f,i})^T (\nabla_x K_1)(x_{f,i}^{(n)},x_{f,j}^{(n)})+
(g^{(n)}_{f,j})^T (\nabla_{x} K_1)\big(x_{f,j}^{(n)},x_{f,i}^{(n)})\,.
\end{equation}
We deduce from Proposition \ref{propekjdhgud}  that
\begin{equation}
\rho(\tilde{\Theta}^{(n+1)})=\rho(\Theta^{(n)})-\epsilon \frac{(1-\rho(\Theta^{(n)})) (\hat{y}^{(n)}_f)^T T^{(n)} \hat{y}_f^{(n)}  - (\hat{z}_f^{(n)})^T T^{(n)} \hat{z}_f^{(n)} }{(\hat{y}_f^{(n)})^T \Theta^{(n)} \hat{y}_f^{(n)}}+\mathcal{O}(\epsilon^2)\,,
\end{equation}
which implies the result after simplification.
\end{proof}

The following corollary is a direct application of Prop.~\ref{propkjdhejdh}.
\begin{Corollary}\label{corfkjhfkfhjf}
Let $K_1(x,x')=e^{-\gamma |x-x'|^2}$. It holds true that
\begin{equation}
\rho(\tilde{\Theta}^{(n+1)})=\rho(\Theta^{(n)})-\epsilon \sum_{i=1}^{N_f} (g^{(n)}_i)^T \hat{g}^{(n)}_i +\mathcal{O}(\epsilon^2)\,.
\end{equation}
with $\hat{g}^{(n)}_i:=\frac{4\gamma}{(y_f^{(n)})^T (\Theta^{(n)})^{-1} y_f^{(n)}} \sum_{j=1}^{N_f}   \Gamma^{(n)}_{i,j}x^{(n)}_{f,j} $,
\begin{equation}
\begin{split}
\Gamma^{(n)}_{i,j}=&\delta_{i,j}  \hat{z}^{(n)}_{f,i}    (\Theta^{(n)} \hat{z}^{(n)})_{f,i}
-\hat{z}^{(n)}_{f,i}   \Theta^{(n)}_{i,j}   \hat{z}^{(n)}_{f,j}\\
&- (1-\rho(\Theta^{(n)})) \delta_{i,j} \hat{y}^{(n)}_{f,i}    y_{f,i}^{(n)}+(1-\rho(\Theta^{(n)}))
\hat{y}^{(n)}_{f,i}   \Theta^{(n)}_{i,j}   \hat{y}^{(n)}_{f,j}\,.
\end{split}
\end{equation}
and $\hat{z}^{(n)}_f= (\pi^{(n)})^T (\pi^{(n)}\Theta^{(n)}(\pi^{(n)})^T)^{-1}\pi^{(n)} y_f$, $\hat{y}^{(n)}_f=(\Theta^{(n)})^{-1} y_f^{(n)}$.
\end{Corollary}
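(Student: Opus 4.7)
The corollary is a direct specialization of Proposition~\ref{propkjdhejdh} to the Gaussian kernel, so the plan is essentially a bookkeeping computation: substitute the explicit gradient of $K_1$ into \eqref{eqkjjdjkhedhd} and rearrange the result into the claimed $\sum_j \Gamma^{(n)}_{i,j} x^{(n)}_{f,j}$ form.

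First I would compute $\nabla_x K_1$ for the Gaussian. Differentiating $K_1(x,x')=e^{-\gamma|x-x'|^2}$ in its first argument gives
\begin{equation}
(\nabla_x K_1)(x,x') = -2\gamma(x-x')\, K_1(x,x'),
\end{equation}
so in particular $(\nabla_x K_1)(x^{(n)}_{f,i}, x^{(n)}_{f,j}) = -2\gamma\,(x^{(n)}_{f,i}-x^{(n)}_{f,j})\,\Theta^{(n)}_{i,j}$. Plugging this into \eqref{eqkjjdjkhedhd}, the expression for $\hat g^{(n)}_{f,i}$ from Proposition~\ref{propkjdhejdh} becomes
\begin{equation}
\hat g^{(n)}_{f,i} = \frac{-4\gamma}{(y^{(n)}_f)^T(\Theta^{(n)})^{-1}y^{(n)}_f}\sum_{j=1}^{N_f}\bigl[(1-\rho)\hat y^{(n)}_{f,i}\hat y^{(n)}_{f,j} - \hat z^{(n)}_{f,i}\hat z^{(n)}_{f,j}\bigr](x^{(n)}_{f,i}-x^{(n)}_{f,j})\Theta^{(n)}_{i,j},
\end{equation}
which is already of the right scalar prefactor $4\gamma/(y_f^T\Theta^{-1}y_f)$ once the minus sign is absorbed by swapping the two bracket terms.

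The remaining step is to split the factor $(x^{(n)}_{f,i}-x^{(n)}_{f,j})$ and collect the coefficients of each $x^{(n)}_{f,j}$. The $x^{(n)}_{f,j}$ pieces contribute directly the off-diagonal entries $-\hat z^{(n)}_{f,i}\Theta^{(n)}_{i,j}\hat z^{(n)}_{f,j}+(1-\rho)\hat y^{(n)}_{f,i}\Theta^{(n)}_{i,j}\hat y^{(n)}_{f,j}$ of $\Gamma^{(n)}_{i,j}$. The $x^{(n)}_{f,i}$ pieces are reindexed into a $\delta_{i,j}x^{(n)}_{f,j}$ contribution; after summing over $j$ inside the brackets one recognizes $\sum_j \Theta^{(n)}_{i,j}\hat z^{(n)}_{f,j}=(\Theta^{(n)}\hat z^{(n)})_{f,i}$ and, crucially, $\sum_j \Theta^{(n)}_{i,j}\hat y^{(n)}_{f,j}=(\Theta^{(n)}\hat y^{(n)})_i=y^{(n)}_{f,i}$ since $\hat y^{(n)}_f=(\Theta^{(n)})^{-1}y^{(n)}_f$. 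These two simplifications produce precisely the diagonal entries $\delta_{i,j}\hat z^{(n)}_{f,i}(\Theta^{(n)}\hat z^{(n)})_{f,i}-(1-\rho)\delta_{i,j}\hat y^{(n)}_{f,i}y^{(n)}_{f,i}$ of $\Gamma^{(n)}_{i,j}$.

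No real obstacle is expected: the whole proof is bookkeeping. The only mildly delicate step is the identification $\sum_j \Theta^{(n)}_{i,j}\hat y^{(n)}_{f,j}=y^{(n)}_{f,i}$ (and the analogous treatment of the $\hat z$ term), which is what collapses the $x^{(n)}_{f,i}$-proportional pieces into the diagonal entries of $\Gamma^{(n)}$. Once that is made, matching to the stated formula is immediate.
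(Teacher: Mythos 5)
Your proposal is correct and follows exactly the route the paper intends: the paper presents this corollary as a direct application of Proposition~\ref{propkjdhejdh}, and your computation (substituting $(\nabla_x K_1)(x,x')=-2\gamma(x-x')K_1(x,x')$ into \eqref{eqkjjdjkhedhd}, splitting $(x^{(n)}_{f,i}-x^{(n)}_{f,j})$, and using $\Theta^{(n)}\hat y^{(n)}_f=y^{(n)}_f$ to collapse the diagonal terms) is precisely the bookkeeping the paper leaves implicit. The signs and the identification of $\Gamma^{(n)}_{i,j}$ all check out.
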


\begin{figure}[h]
\begin{center}
\includegraphics[width= 0.5\textwidth]{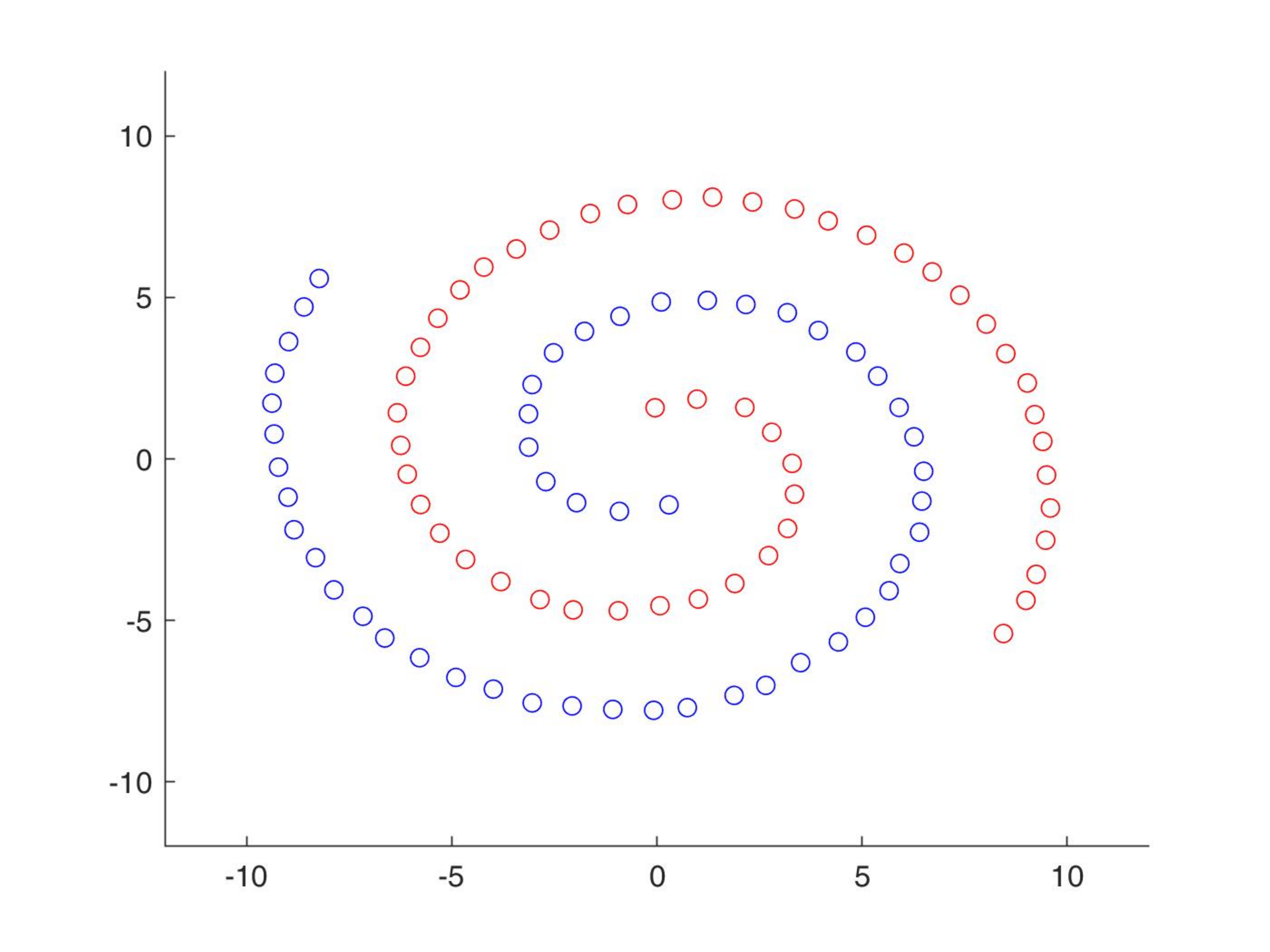}
\caption{Swiss Roll Cheesecake. $N=100$.  Red points have label $-1$ and blue points have label $1$.}
\label{figswissroll1}
\end{center}
\end{figure}

\section{The Flow of the KF algorithm on the Swiss Roll cheesecake}\label{subseccheesecake}

From a numerical analysis  perspective, the flow $F_n(x)$ associated with the Sec.~\ref{secKernelFlow} KF algorithm
 approximates, in the sense of Smoothed Particle Hydrodynamics \cite{gingold1977smoothed}, a flow $F(t,x)$ mapping $\R \times \X$ into $\X$. Writing $x_i$ for the $N$ training data points in $\X$, as $\epsilon\downarrow 0$, $F_n(x_i)$ approximates $X_i(t):=F(t,x_i)$ which (after averaging the effect of the randomized batches) can be identified as the
solution of a system of ODEs  of the form
\begin{equation}\label{eqkjehdghdjh}
\frac{d X}{dt} = \mathcal{G}(X, N, N_f, K_1)\,.
\end{equation}

\subsection{Implementation of the KF algorithm}

We will now explore a few properties of this approximation by  implementing the KF algorithm for the  Swiss Roll Cheesecake illustrated in Fig.~\ref{figswissroll1}.
The dataset is composed of $N=100$ points $x_i$ in $\R^2$ in the shape of two concentric spirals.
 Red points have label $-1$ and blue points have label $1$.
Since our purpose is limited to illustrating properties of the discrete flow $F_n(x)$ associated with the Kernel Flow algorithm we will consider all those points as training points (will not introduce a test dataset) and visualise the trajectories $n\rightarrow F_n(x_i)$  of the data points $x_i$.

The KF algorithm is implemented with the Gaussian kernel of Corollary \ref{corfkjhfkfhjf} with $\gamma^{-1}=4$.
Training is done in random  batches of size $N_f$ and we use $N_c=N_f/2$ to compute the ratio $\rho$ and learn the parameters of the network. We start with $N_f=N$ and as training progresses points of the same color start merging. Therefore to avoid near singular matrices caused by batches with points of the same color sharing nearly identical coordinates, once the distance between two points of the same color is smaller than $10^{-4}$ (in Fig.~\ref{figswissroll2}, \ref{figswissroll3} and \ref{figswissroll4}) we drop one of them out of the set of possible candidates for the batch and decrease $N_f$ by $1$ (the point left out remains advected by $\epsilon G_{n+1}$ but is simply no longer available as an interpolation point defining $G_{n+1}$).

Fig.~\ref{figswissroll2}  shows $F_n(x_i)$ vs $n$ for $1\leq n \leq 7000$.
In Fig.~\ref{figswissroll2} the value of $\epsilon$ is chosen at each step $n$ so that the  perturbation of each data point $x_i$ of the batch is no greater than $p\%$ with $p=10$ for $1\leq n \leq 1000$ and $p=10/\sqrt{n/1000}$ for $n\geq 1000$.
The two spirals quickly unroll into two linearly separable clusters.

\begin{figure}[h]
\begin{center}
\includegraphics[width= \textwidth]{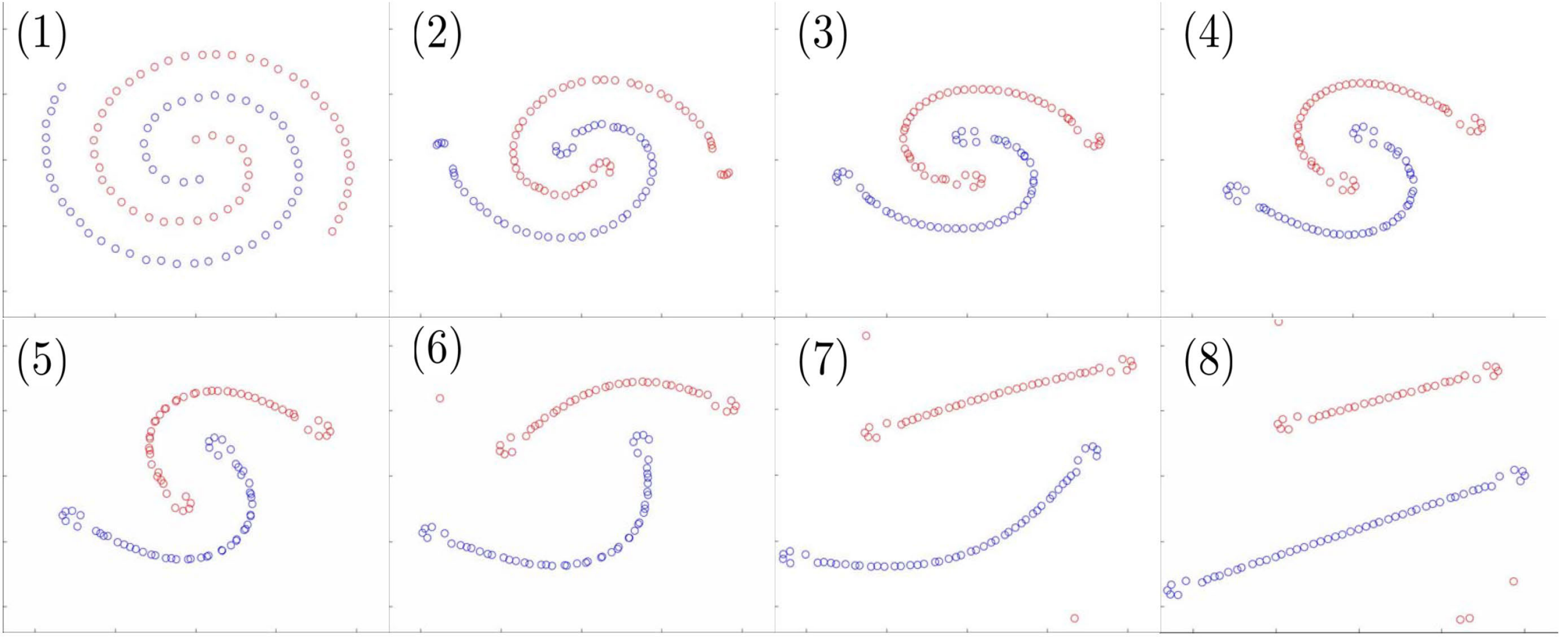}
\caption{$F_n(x_i)$ for 8 different values of $n$. $\epsilon$ is chosen at each step $n$ so that the  perturbation of each data point $x_i$ of the batch is no greater than $0.5\%$.}
\label{figswissroll3}
\end{center}
\end{figure}

Fig.~\ref{figswissroll3}  shows $F_n(x_i)$ vs $n$ for $1\leq n \leq 500000$.
The value of $\epsilon$ is chosen at each step $n$ so that the  perturbation of each data point $x_i$ of the batch is no greater than $0.5\%$. The two intertwined spirals unroll into straight (vibrating) segments.
The final unrolled  configuration appears to be unstable (some red points are ejected out of the unrolled red segment towards the end of the simulation) and although this instability seems to be alleviated  by adjusting  $\epsilon$ to a smaller value at the end of the simulation it seems to also be caused by a combination of  (1)  the stiffness of the flow being simulated (2) the process of permanently removing points from the pool of possible candidates for the batch. Although these points remain advected by the flow and are initially at distance less than $10^{-4}$ of a point of the same color, the stiffness of the flow can quickly increase this distance and separate the point from its group if $\epsilon$ is not small enough.

\begin{figure}[h]
\begin{center}
\includegraphics[width= \textwidth]{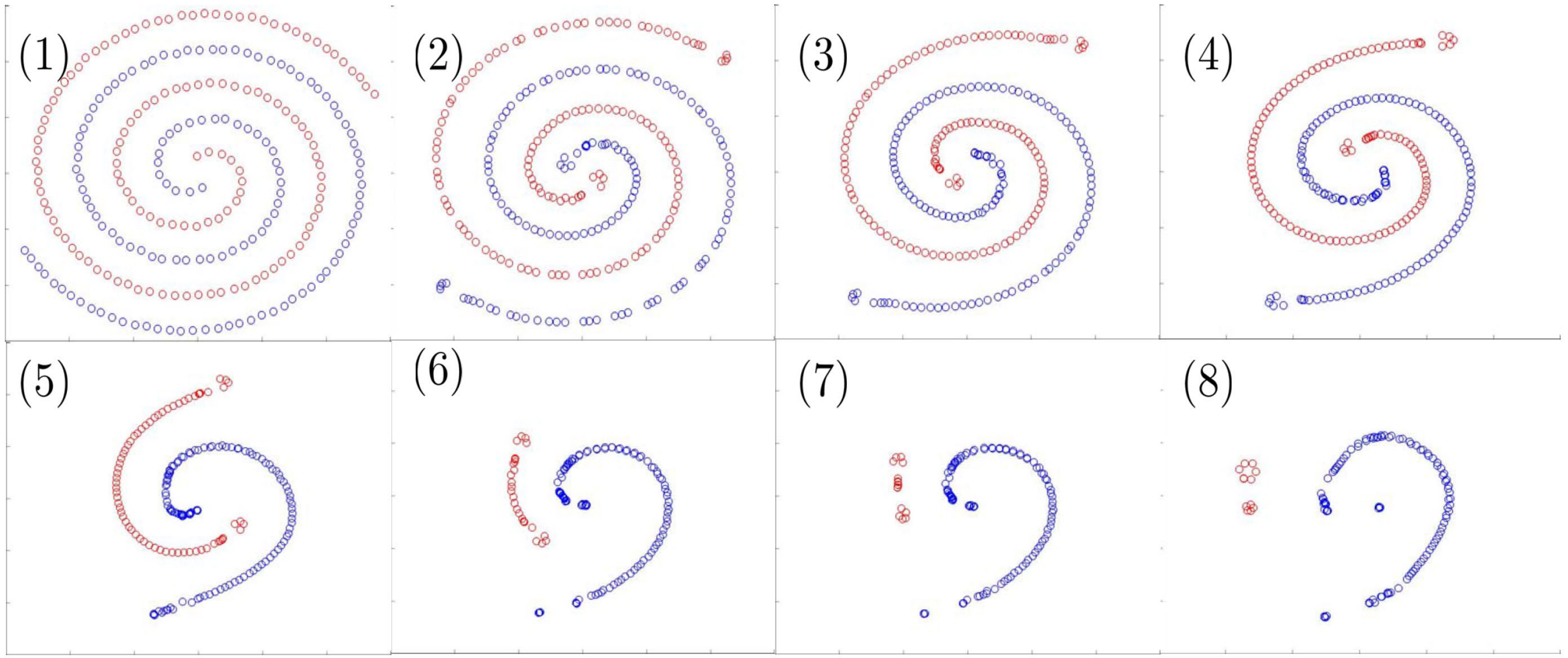}
\caption{$N=250$. $F_n(x_i)$ for 8 different values of $n$. $\epsilon$ is chosen at each step $n$ so that the  absolute perturbation of each data point $x_i$ of the batch is no greater than $0.05$. $K_1(x,x')=e^{-\gamma |x-x'|^2}+\delta(x-x') e^{-\gamma 6^2}$}
\label{figswissroll4}
\end{center}
\end{figure}

\subsection{Addition of nuggets}
The instabilities observed in Fig.~\ref{figswissroll3} seem to vanish (even with significantly larger values of $\epsilon$) after the addition of a nugget (white noise kernel accounting for measurement noise in kriging) to the kernel $K_1$. In
Fig.~\ref{figswissroll4} we consider a longer version of the Swiss Roll Cheesecake ($N=250$). The kernel is
$K_1(x,x')=e^{-\gamma |x-x'|^2}+\delta(x-x') e^{-\gamma 6^2}$ ($\gamma^{-1}=4$).
$\epsilon$ is chosen at each step $n$ so that the  absolute perturbation (maximum translation) of each data point $x_i$ of the batch is no greater than $0.05$.
Fig.~\ref{figswissroll4}  shows $F_n(x_i)$ vs $n$ for $1\leq n \leq 500000$.  The two intertwined spirals unroll into stable
clusters slowly drifting away from each other.

\begin{figure}[!htb]
\begin{center}
\includegraphics[width= \textwidth]{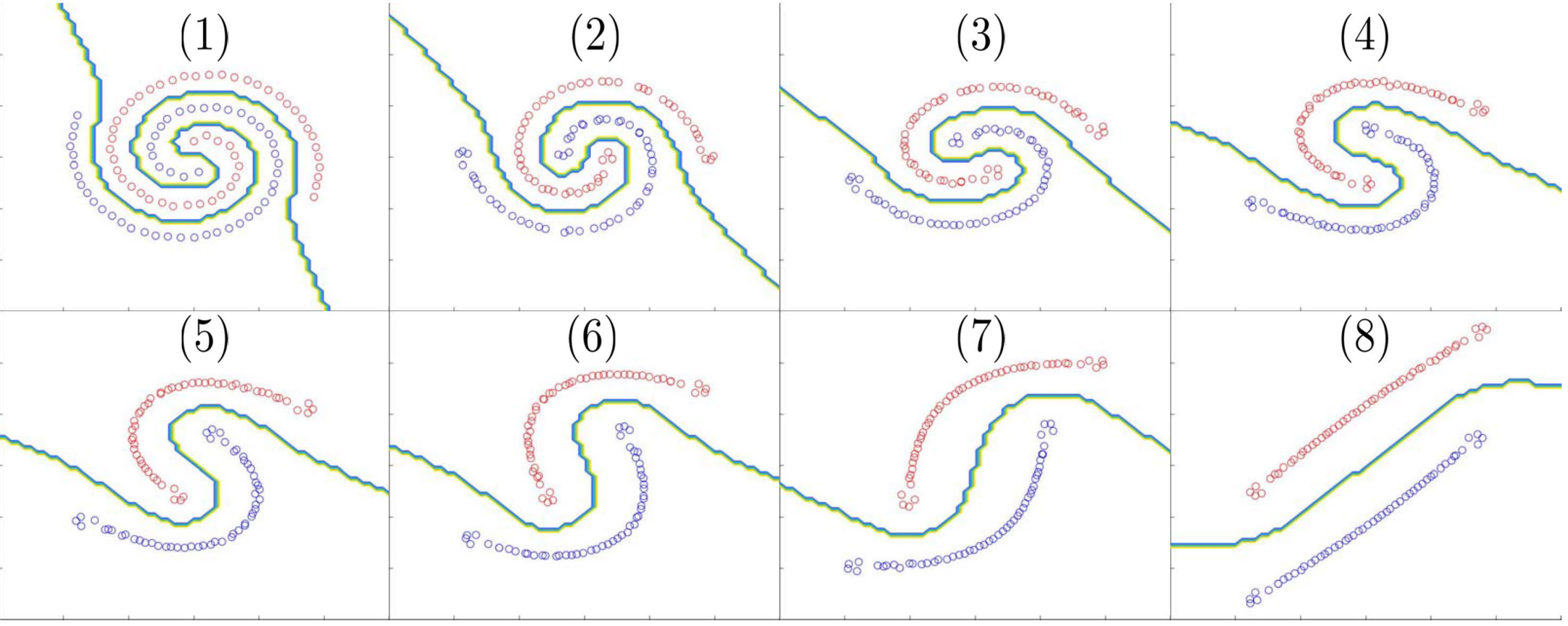}
\caption{$F_n(x_i)$ and decision boundary for 8 different values of $n$.
}
\label{figswissroll5}
\end{center}
\end{figure}

\begin{figure}[!htb]
\begin{center}
\includegraphics[width= \textwidth]{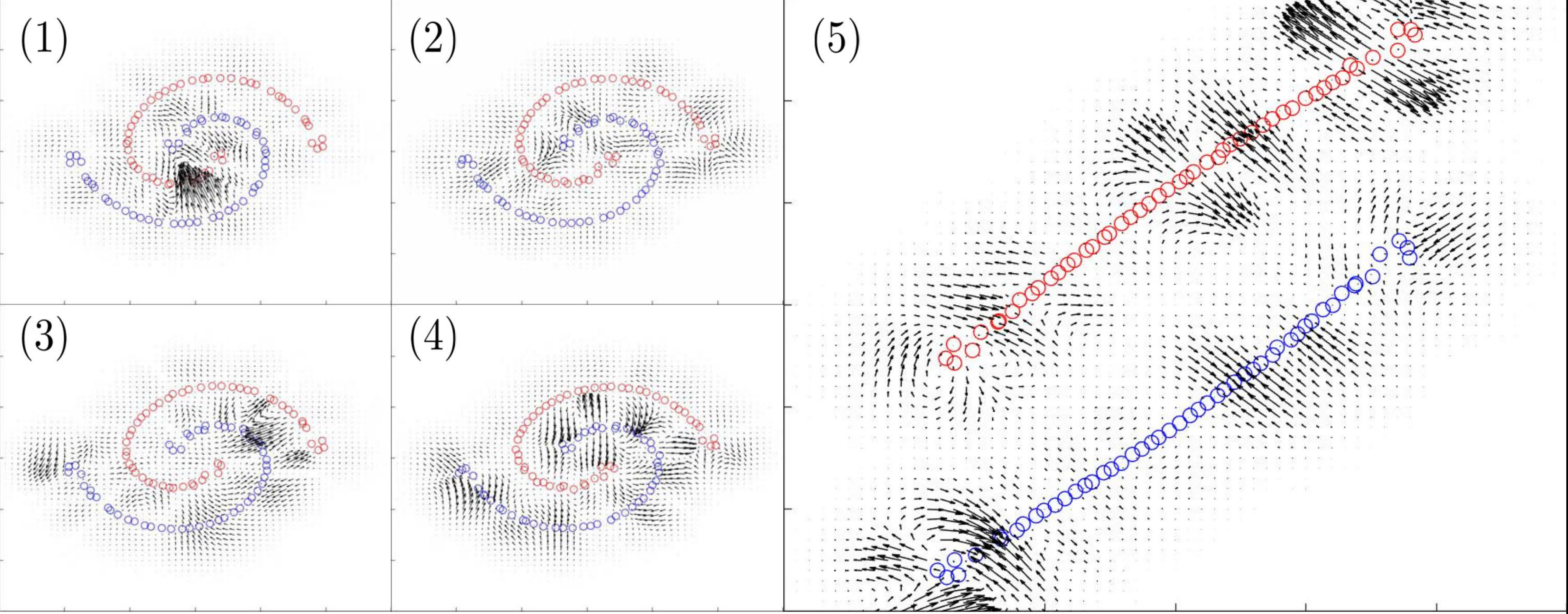}
\caption{ Instantaneous velocity field $F_{n+1}(x)-F_n(x)$. (1-4) show $4$ successive values. (5)  shows the instantaneous velocity field for the final configuration.
}
\label{figswissroll6}
\end{center}
\end{figure}

\subsection{Instantaneous and average vector fields}
With the addition of the nugget, the permanent removal of close points from the pool of candidates is no longer required for
avoiding singular matrices.
In
Fig.~\ref{figswissroll5}, \ref{figswissroll6} and \ref{figswissroll7} (see \cite{youtubevideokf} for videos)   we consider the $N=100$ version of the Swiss Roll Cheesecake. The kernel is
$K_1(x,x')=e^{-\gamma |x-x'|^2}+\delta(x-x') e^{-\gamma 6^2}$ ($\gamma^{-1}=4$).
$\epsilon$ is chosen at each step $n$ so that the  absolute perturbation (maximum translation) of each data point $x_i$ of the batch is no greater than $0.2$. Only a nugget is added and close points are not removed from the pool of candidates (which eliminates the instabilities observed in Fig.~\ref{figswissroll3}).
Fig.~\ref{figswissroll5}  shows $F_n(x_i)$ vs $n$ for $1\leq n \leq 180000$ and the decision boundary between the two classes vs $n$.
 The two intertwined spirals unroll into stable
clusters.
Fig.~\ref{figswissroll6} shows  the instantaneous velocity field $F_{n+1}(x)-F_n(x)$.
Fig.~\ref{figswissroll7} shows  the average velocity field $10(F_{n+300}(x)-F_n(x))/300$.
The difference between the instantaneous and average velocity fields is and indication of the presence of multiple time scales caused by the randomization process and the stiffness of the underlying flow.

\begin{figure}[h]
\begin{center}
\includegraphics[width= \textwidth]{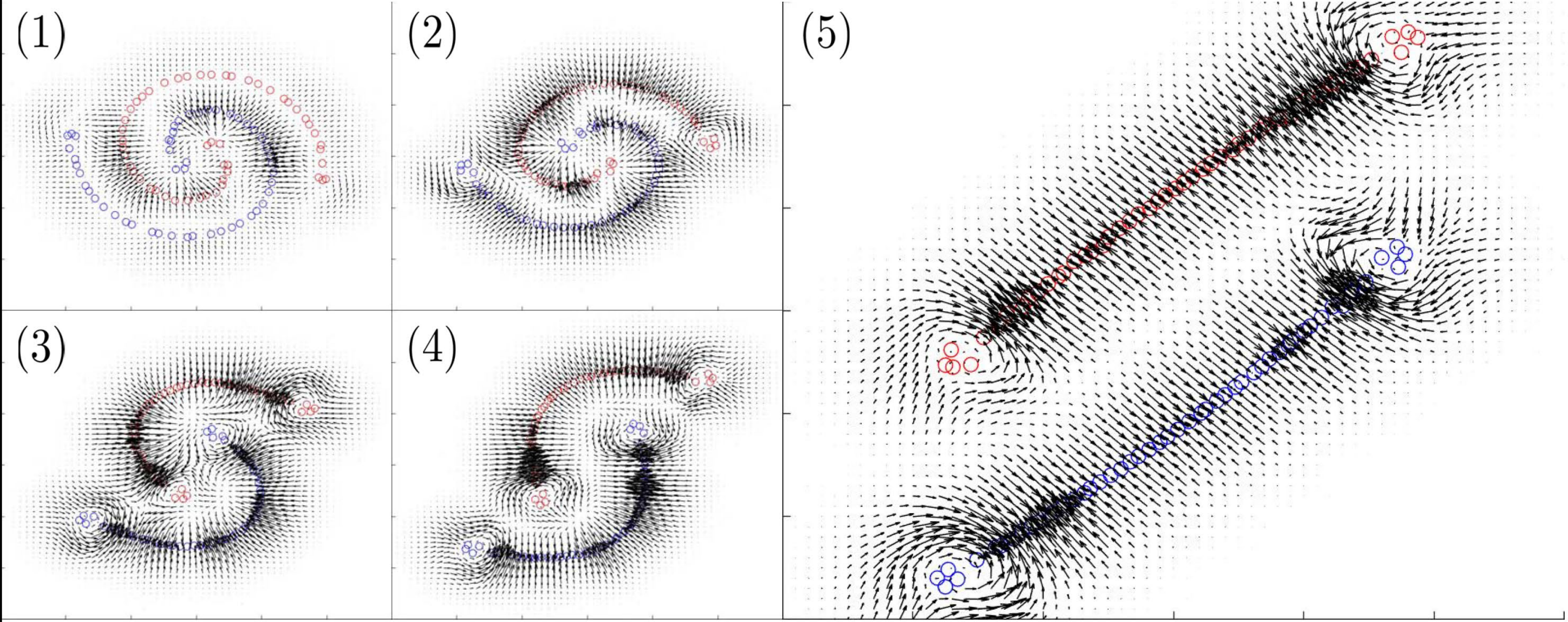}
\caption{Average velocity field $10(F_{n+300}(x)-F_n(x))/300$ for 5 different values of $n$.
}
\label{figswissroll7}
\end{center}
\end{figure}

\subsection{The continuous flow.}
 The intriguing behavior of these flows, calls for their investigation from the
 a numerical integration perspective (note that an ODE perspective emerges as $\epsilon \downarrow 0$, an SDE perspective is relevant when $\epsilon$ is non-null due to the randomization of the batches, a PDE perspective emerges as $\epsilon \downarrow 0$ and $N\rightarrow \infty$, and an SPDE approximation perspective is relevant when $\epsilon$ is non-null and $N$ is finite).

Note that the KF flow $F_n(x)$ can be seen (as $\epsilon \downarrow 0$) as the numerical approximation of a continuous flow $F(t,x)$.
identified as the solution of the dynamical system
\begin{equation}\label{eqkrhuhgtdjh0}
\frac{\partial F(t,x)}{\partial t} = -  \E_{X, \pi}\Bigg[ \Big(\big(\nabla_Z \rho(X,Z,\pi)\big)^T
\big(K_1(Z,Z)\big)^{-1} K_1\big(Z,x\big)\Big) \Bigg\rvert_{Z=F(X,t)}\Bigg]\,,
\end{equation}
with initial condition $F(0,x)=x$ and where the elements of \eqref{eqkrhuhgtdjh0} are defined as follows.
$X$ is a random vector of $\X^{N_f}$ representing the random sampling of the training data in a batch size $N_f$.
Writing $u(X)\in \Y^{N_f}$ for the vector whose entries are the labels of the entries of $X\in \X^{N_f}$ and
$\pi$ for a random $N_c\times N_f$ matrix corresponding to the selection of $N_c$ elements at out $N_f$ (at random, uniformly, without replacement), $\rho$, in \eqref{eqkrhuhgtdjh0}, is defined as follows
\begin{equation}\label{eqkertdjh0}
\rho(X,Z,\pi) =  1- \frac{ u( X)^T \pi^T (K_1(\pi Z, \pi Z))^{-1}\pi u(X) }{u(X)^T (K_1(Z,Z))^{-1}u(X)}\,.
\end{equation}
The average vector field  in Fig.~\ref{figswissroll7} is an approximation of the right hand side of \eqref{eqkrhuhgtdjh0}
and the convergence of $F_{\textrm{round}(t/\epsilon)}(x)$ towards $F(t,x)$ as $\downarrow 0$ is in the sense of two-scale flow convergence described in  \cite{tao2010nonintrusive}.

   \begin{table}[h]
 \begin{center}
    \begin{tabular}{ | l | l | l | l | l|}
    \hline
    $N_I$ & Average error & Min error & Max error & Standard Deviation \\ \hline
    $6000$ &  $0.014$   &   $0.0136 $   &   $0.0143 $  &   $1.44 \times 10^{-4}$\\ \hline
    $600$ &  $0.014$    &   $0.0137$   &   $0.0142 $ &    $9.79\times 10^{-5}$  \\ \hline
    $60$ & $0.0141$ &      $0.0136$      & $0.0146$   &  $2.03\times 10^{-4}$ \\
    \hline
    $10$ &   $0.015$ &      $0.0136$   &   $0.0177$  &   $7.13\times 10^{-4}$\\ \hline
    \end{tabular}
    \caption{MNIST test errors using $N_I$ interpolation points}\label{table1}
\end{center}
\end{table}

\begin{figure}[h]
\begin{center}
\includegraphics[width= \textwidth]{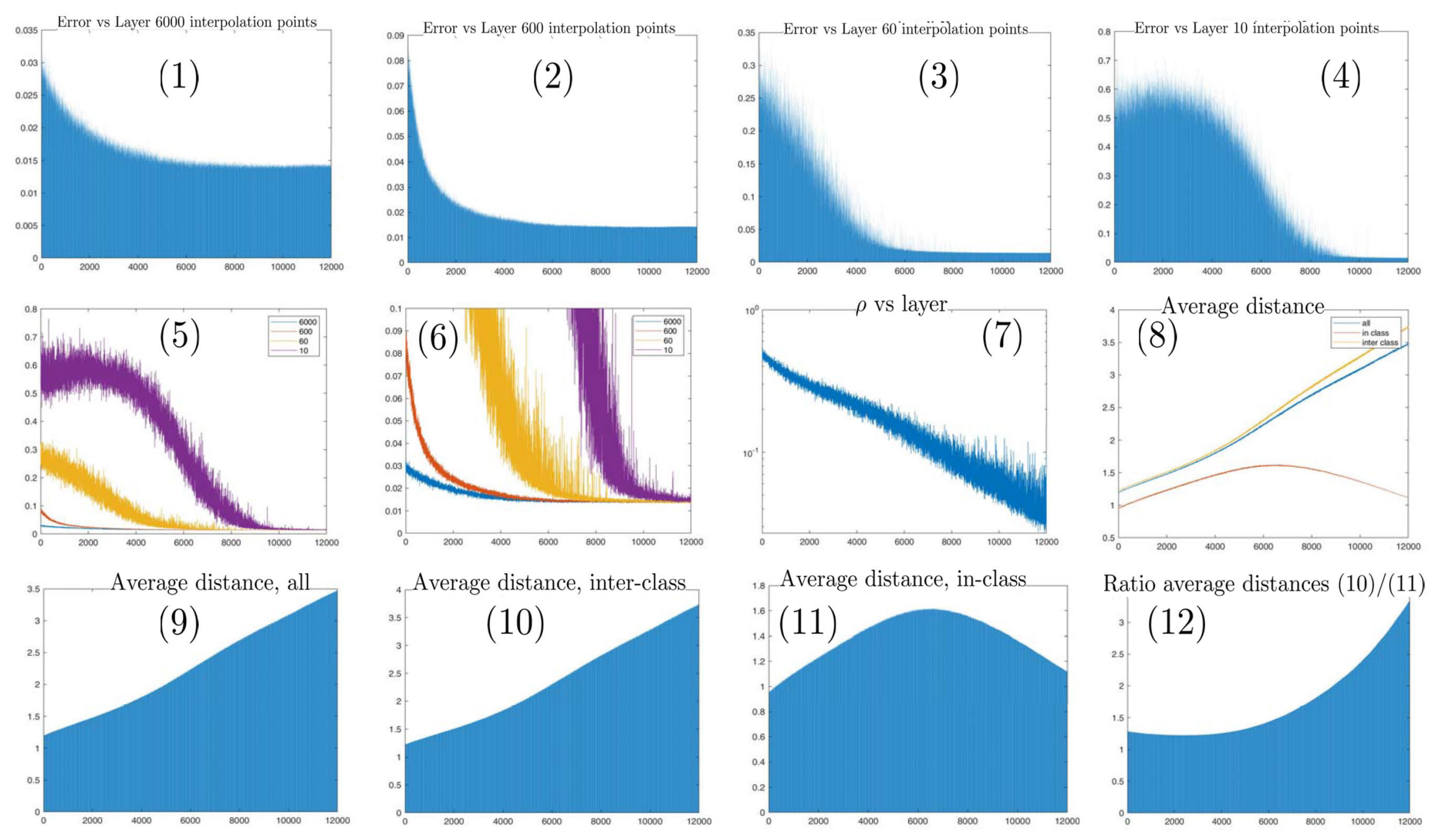}
\caption{Results for MNIST. $N=60000$, $N_f=600$ and $N_c=300$. (1) Test error vs depth $n$ with $N_I=6000$ (2) Test error vs depth $n$ with $N_I=600$ (3) Test error vs depth $n$ with $N_I=60$ (4) Test error vs depth $n$ with $N_I=10$  (5,6) Test error vs depth $n$ with $N_I=6000, 600, 60, 10$ (7) $\rho$  vs depth $n$ (8)  Mean-squared distances   between  images $F_n(x_i)$ (all, inter class and in class) vs depth $n$ (9)
 Mean-squared distances   between  images (all) vs depth $n$  (10) Mean-squared distances between  images (inter class) vs depth $n$  (11) Mean-squared distances between  images (in class) vs depth $n$ (12) Ratio $(10)/(11)$. }
\label{figmnist1}
\end{center}
\end{figure}
\begin{figure}[h]
\begin{center}
\includegraphics[width= \textwidth]{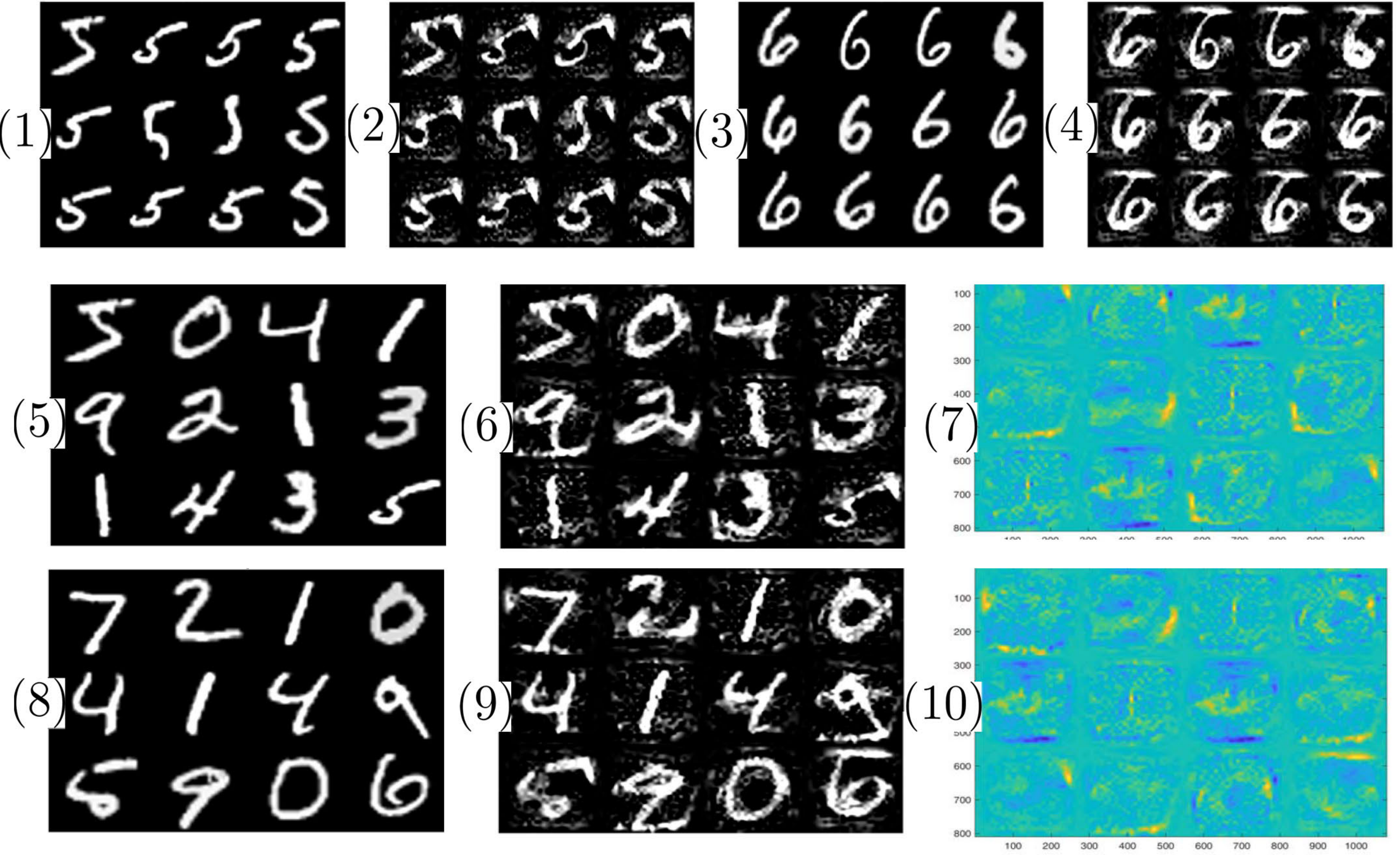}
\caption{Results for MNIST. $N=60000$, $N_f=600$ and $N_c=300$. (1, 3, 5) Training data $x_i$ (2, 4, 6) $F_n(x_i)$ for $n=12000$ (7) $F_n(x_i)-x_i$ for training data and $n=12000$ (8) Test data $x_i$ (9) $F_n(x_i)$ for test data and $n=12000$ (10) $F_n(x_i)-x_i$ for test data and $n=12000$. }
\label{figmnist2}
\end{center}
\end{figure}

\section{Numerical experiments with the MNIST dataset}\label{kfmnist}

We will now implement, test and analyze the Sec.~\ref{secKernelFlow} KF algorithm on the MNIST dataset \cite{yann1998mnist}.
This training set is composed of $60000$, $28\times 28$ images of handwritten digits (partitioned into $10$ classes)  with a corresponding vector of $60000$ labels (with values in $\{1,\ldots,9, 0\}$).
The test set is composed of $10000$, $28\times 28$ images of handwritten digits with a corresponding vector of $10000$ labels.

\subsection{Learning with small random batches of the full training dataset.}\label{subsecwdgejkhdgjd}
We first implement the Sec.~\ref{secKernelFlow} KF algorithm  with the full training set, i.e. $N=60000$. Images are normalized to have $L^2$ norm one and we use the Gaussian kernel of Corollary \ref{corfkjhfkfhjf}  and set $\gamma^{-1}$ equal to the mean squared distance between training images.
Training is performed in random batches of size $N_f=600$ and we use $N_c=300$ to compute the ratio $\rho$ and learn the parameters of the network (we do not use a nugget and we do not exclude points that are too close from those batches). The value of $\epsilon$ is chosen at each step $n$ so that the  perturbation of each data point $x_i$ of the batch
is no greater than $1\%$ ($\epsilon=0.01\times \max_i \frac{ |x^{(n)}_{f,i}|_{L^2}}{|\hat{g}^{(n)}_{f,i}|_{L^2}}$).

 Classification of the test data is performed by interpolating a subset of $N_I$ images/labels $(x_i,y_i)$ of the training data
 with the kernel $K_n$ (the kernel at step/layer $n$ in the  Sec.~\ref{refdklejdhhdkj} Kernel Flow algorithm).
 Here each $x_i$ is a $28\times 28$ image and each $y_i$ is a unit vector $e_j$ in $\R^{10}$ pointing in the direction of the class of the images (e.g. $y_i=(1,0,0,0,0,0,0,0,0,0)=e_1$ if the class of image $x_i$ is $j=1$ and $y_i=(0,0,0,0,0,0,0,0,0,1)=e_{10}$ if the class of image $x_i$ is $j=0$).
 The interpolant $u_n$ is a function from $\R^{28\times 28}$ to $\R^{10}$ and the class of an image $x$ is simply identified as $\operatorname{argmax}_{j} e_j^T \cdot u_{n}(x)$.
 The Sec.~\ref{refdklejdhhdkj} Kernel Flow algorithm is implemented (using Remark \ref{rmkksjhddg7}) with $N=60000$, $N_f=600$ and $N_c=300$ and ended for $n=12000$ (resulting in a network with $12000$ layers).

  Table \ref{table1} shows
  test errors (with the  test data composed of $10000$ images) obtained with $N_I=6000, 600, 60$ and $10$ interpolation points (selected at random uniformly without replacement and conditioned on containing an equal number of example from each class to avoid degeneracy for $N_I=10, 60$). The second column shows  errors averaged over the last  $100$ layers of the network (i.e. obtained by interpolating the $N_I$ data points with $K_n$ for $n=11901, 11902, \ldots, 12000$). The third, fourth and fifth columns show the min, max and standard deviation of the error over the same last  $100$ layers of the network. Surprisingly, around layer $n=12000$, the kernel $K_n$ achieves an average error of about $1.5\%$ with only $10$ data points (by using only 1 random example of each digit as an interpolation point).
  Multiple runs of the algorithm suggest that those results are stable.

Fig.~\ref{figmnist1} shows test errors vs depth $n$ (with $N_I=6000, 600, 60, 10$ interpolation points), the value of the ratio $\rho$ vs $n$ (computed with $N_f=600$ and $N_c=300$) and the mean squared distances between (all, inter class and in class) images $F_n(x_i)$ vs $n$.
Observe that all  mean-squared distances increase until $n\approx 7000$. After $n\approx 7000$ the in class mean-squared distances  decreases with $n$ whereas the inter-class  mean-squared distances  continue increasing. This suggests that after $n\approx 7000$ the algorithm starts clustering the data per class. Note also that while the test errors,  with $N_I=6000,600$ interpolation points, decrease immediately and sharply, the test errors with $N_I=10$ interpolation points increase slightly until $n\approx 3000$ towards $60\%$, after which they drop and seem to stabilize around $1.5\%$ towards $n\approx 10000$.

It is known that iterated random functions typically converge because they are contractive on average  \cite{diaconis1999iterated,dunlop2017deep}. Here training appears to create  iterated functions that are contractive with each class but expansive between classes.

Fig.~\ref{figmnist2} shows $10\times x_i$ and $10\times F_n(x_i)$ and $20\times (F_n(x_i)-x_i)$ for $n=12000$, training images and test images.
The algorithm appears to introduce small, archetypical, and class dependent, perturbations in those images.

\begin{figure}[h]
\begin{center}
\includegraphics[width= \textwidth]{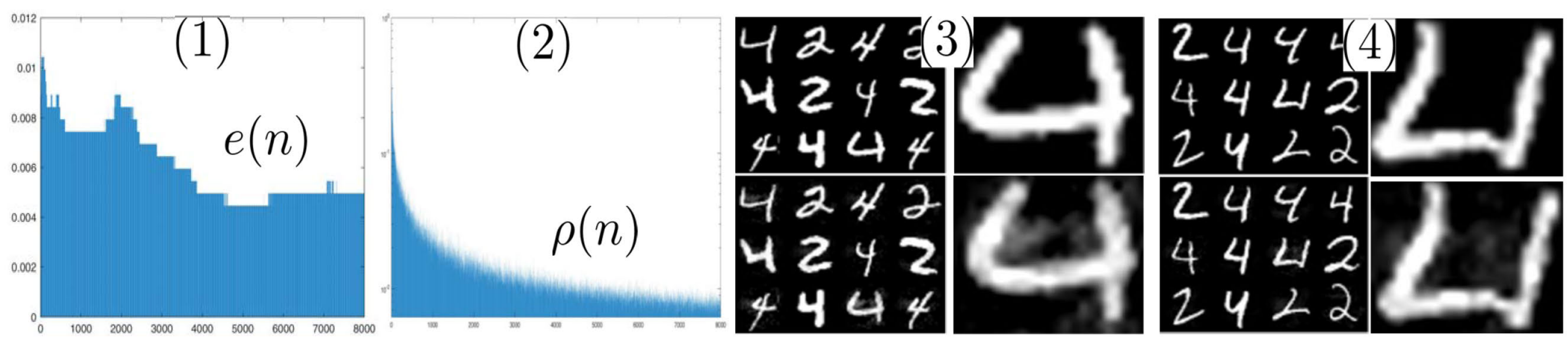}
\caption{(1) Test error vs $n$ (2) $\rho$ vs $n$ (3) $x_i$ and $F_n(x_i)$ for $n=8000$ and $i$ corresponding to the first 12 training images (4)  $x_i$ and $F_n(x_i)$ for $n=8000$ and $i$ corresponding to the first 12 test images. MNIST with classes $2$ and $4$, $600$ training images and $100$ test images.}
\label{fig2and4}
\end{center}
\end{figure}
\begin{figure}[h]
\begin{center}
\includegraphics[width= \textwidth]{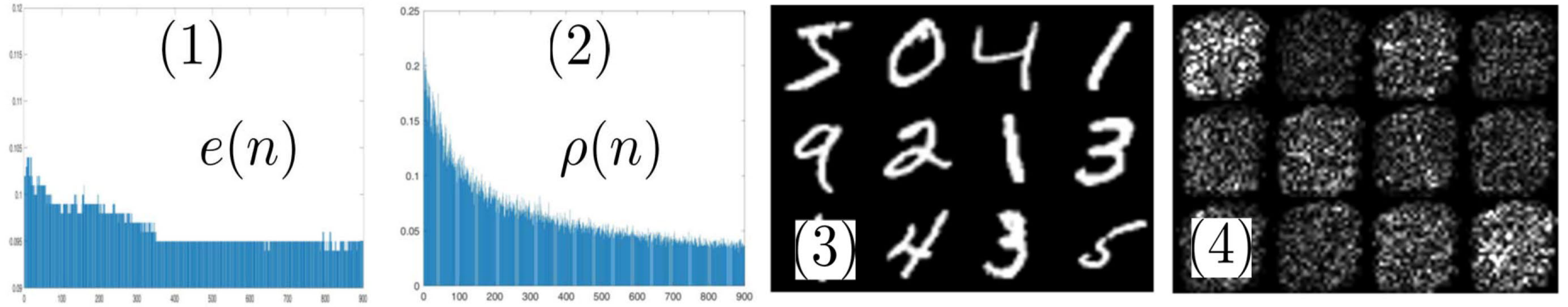}
\caption{MNIST with all classes, $1200$ training images and $2000$ test images. (1) Test error vs $n$ (2) $\rho$ vs $n$ (3)
Training images $x_i$ (4) $10\times \operatorname{abs}(F_n(x_i)-x_i)$ for $n=900$   }
\label{fullmnist}
\end{center}
\end{figure}
\begin{figure}[h]
\begin{center}
\includegraphics[width= \textwidth]{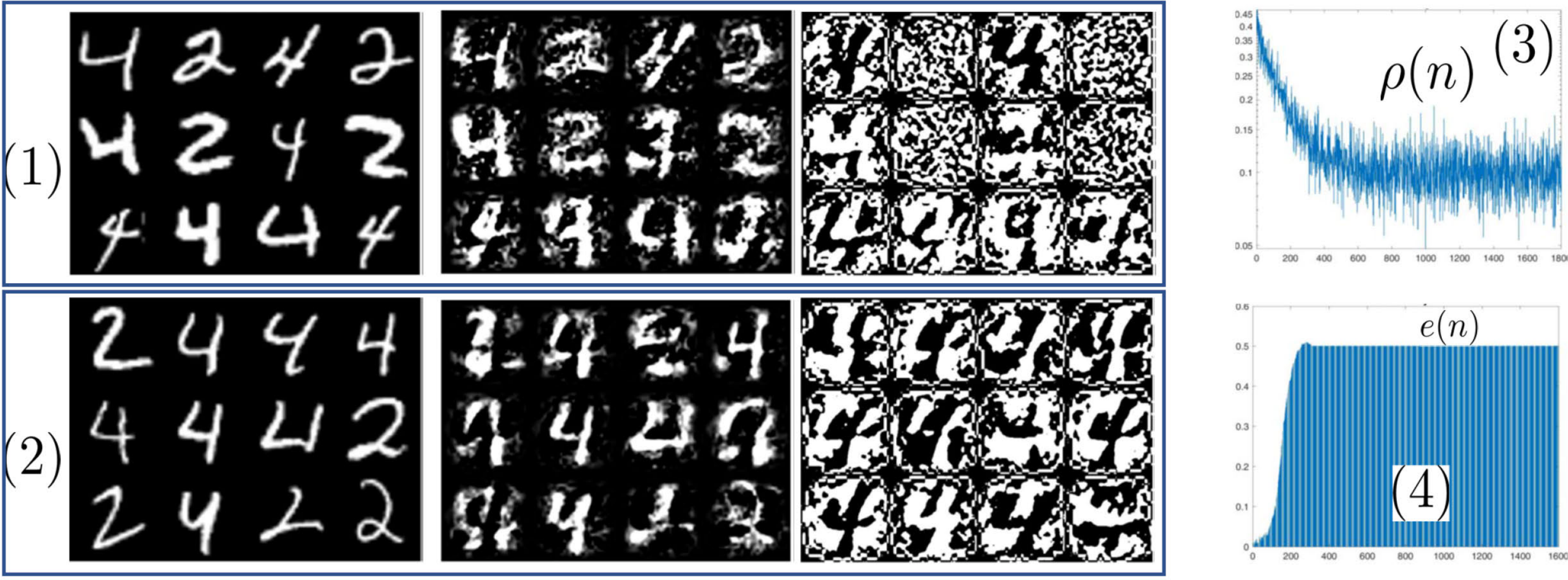}
\caption{MNIST with classes $2$ and $4$, $600$ training images and $100$ test images. Step sizes are too large, $\rho$ decreases but the modes collapse. (1) $x_i$ and $F_n(x_i)$ for $n=1,100, 1800$ and $i$ corresponding to the first 12 training images
(2) $x_i$ and $F_n(x_i)$ for $n=1, 100, 1800$ and $i$ corresponding to the first 12 test images (3) $\rho$ vs $n$ (4) Test error  vs $n$}
\label{fig2and4modecollapse}
\end{center}
\end{figure}

\subsection{Bootstrapping, Brittleness and Data Archetypes}\label{subsecboot}
In Sec.~\ref{subsecwdgejkhdgjd} we used the whole training set of $N=60000$ images to train our network and Fig.~\ref{figmnist1}.
Although the accuracy the network increases (between $n=1$ and $n=12000$) when using only $N_I=6000, 600, 60, 10$ interpolation points, the accuracy of the network with $N_I=60000$ (the full training set as interpolation points) does not seem to be significantly impacted  by the training (the error with $N_I=60000$ is $0.0128$ at $n=1$ and $n=0.0144$ at $n=12000$).
In that sense, all the  algorithm seems to do is to transfer the information contained in the $60000$ data-points to the kernel $K_n$.
Can the accuracy of the interpolation with the full training set be improved? Can the algorithm extract information that cannot already be extracted by performing a simple interpolation (with a simple Gaussian kernel) with the full training dataset?
To answer these questions we will now implement the  Sec.~\ref{refdklejdhhdkj} Kernel Flow algorithm with subsets of  training and test images and train the network with $N_f=N$ (the size of each batch is equal to the total number of training images), $N_c=N_f/2$ and possibly subsets of the set of all classes. We work with raw images
(not normalized to have $L^2$ norm one). We use the Gaussian kernel of Corollary \ref{corfkjhfkfhjf} and identify $\gamma^{-1}$ as the mean squared distance between training images.
We first take $N=600$ ($600$ training images) showing only twos and fours and attempt to classify $100$ test images (with only twos and fours). Fig.~\ref{fig2and4} corresponds to a successful outcome (with small adapted step sizes $\epsilon$) and shows  the test error vs depth $n$, $\rho$ vs $n$ and
$x_i$ and $F_n(x_i)$ for $n=8000$.  These illustrations suggest that the network can bootstrap data and improve accuracy by introducing small (nearly imperceptible to the naked eye) perturbations to the dataset.
Fig.~\ref{fullmnist} shows another successful run with $N=1200$ training images, $1200$ test images and the full set of $10$ classes.

\paragraph{Mode collapse from going too deep, too fast, with $N_f=N$.}
Fig.~\ref{fig2and4modecollapse} shows a failed outcome (with very large step sizes $\epsilon$, the other parameters are the same as in Fig.~\ref{fig2and4}). Although the ratio $\rho$ decreases during training the error blows up towards $50\%$ and the $F_n(x_i)$ seem to collapse towards two images: a four and a random blur.
We will explain and analyze this mode collapse below.

\subsection{Mode collapse, brittleness of deep learning}
The mode collapse observed in Fig.~\ref{fig2and4modecollapse} is interesting for several reasons. First it shows that a decreasing $\rho$ is not sufficient to ensure generalization and learning. Indeed writing $w$ for a function exactly interpolating (fitting) the training data  the kernel $K_w(x,y)=w(x)w(y)$ would lead to a perfect fit (and hence a value $\rho=0$) of the training set with any number of interpolation points (and in particular one). Although this $K_w$ is positive but degenerate, if the space of kernels explored by the algorithm is large enough, then, unless $N_f\ll N$, it is not clear what would prevent the algorithm from over-fitting and converging towards those pathological kernels.

The brittleness of deep learning \cite{szegedy2013intriguing} is a well known phenomenon predicted \cite{MMckerns2013} from the
 brittleness of doing inference in large dimensional spaces \cite{OSS:2013, OwhadiScovel:2013, owhadiBayesiansirev2013, owhadi2017qualitative}. The mechanisms at play in \cite{owhadiBayesiansirev2013}  suggest that those instabilities may be unavoidable if the inference space is too large and could to some degree be alleviated through a compromise between accuracy and robustness \cite{owhadi2017qualitative}.
 From that perspective the learning of the Green's function in Sec.~\ref{secjhdkhj33} appears to be stable because of strong constraints imposed on the space of kernels (by the structure of the underlying PDE). In Sec.~\ref{subsecwdgejkhdgjd} the difference in size between the training dataset ($N$) and that of the batches ($N_f$) seems to have a stabilizing effect on the algorithm.
 The pathologies observed in Fig.~\ref{fig2and4modecollapse}, and mechanisms leading to brittleness \cite{OSS:2013, OwhadiScovel:2013, owhadiBayesiansirev2013, owhadi2017qualitative, miller2018robust}  seem to suggest that will small data sets and a large space of admissible kernels instabilities may occur and could be alleviated by introducing further constraints on the space of kernels.

 \subsection{Classification archetypes}
 The brittleness of deep learning \cite{OSS:2013,  szegedy2013intriguing} has lead to the construction of libraries of adversarial examples
 whose persistence in the physical work  \cite{kurakin2016adversarial} be exploited by an adversary   \cite{huang2011adversarial}.
 These adversarial examples are constructed via small (near-undetectable to the naked eye) data-dependent (non-random) perturbations of the original images.
 The Kernel Flow algorithm seems to exploit the brittleness of deep learning in the opposite direction (towards improved performance), i.e.
 as suggested in Fig.~\ref{figmnist2} and \ref{fig2and4} the Kernel Flow algorithm seems to improve performance through the construction of residual maps introducing  small (data-dependent) perturbations to the original images.
 The resulting images $F_n(x_i)$ at profound depths could be interpreted as archetypes of the  classes being learned.

\begin{figure}[h]
\begin{center}
\includegraphics[width= \textwidth]{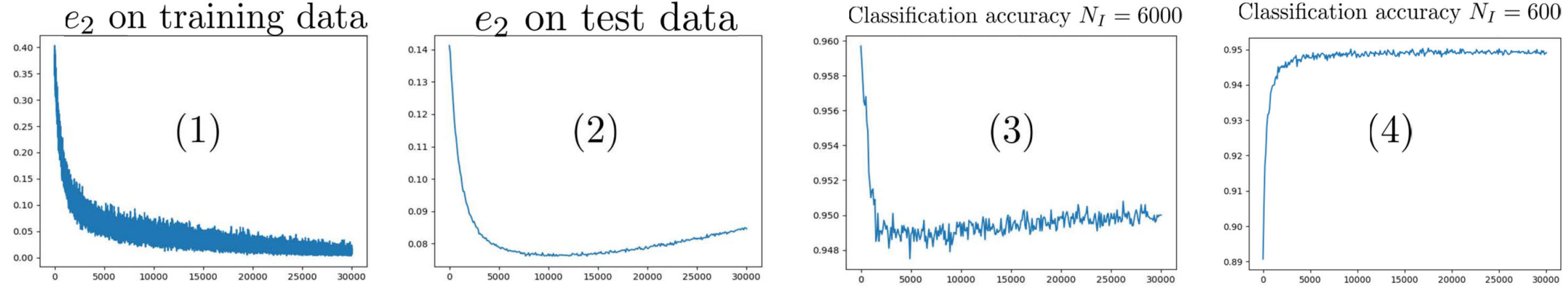}
\caption{Minimizing mean squared interpolation error rather than $\rho$ may not lead to generalization for KF.
(1) Mean squared interpolation error $e_2(n)$ calculated with random subsets of the training data ($N_f=600$ and $N_c=300$)
 (2) Mean squared interpolation error $e_2(n)$ calculated with the test data ($N_I=6000$)
 (3) Classification accuracy (using $N_I=6000$ interpolation points and all $10000$ test data points) vs $n$ (4) Classification accuracy (using $N_I=600$ interpolation points and all $10000$ test data points) vs $n$.}
\label{figmine2}
\end{center}
\end{figure}

 \subsection{On generalization}
 Why the KF algorithm does not seem to overfit the data? Why is it capable of generalization?
  From an initial perspective the KF algorithm appears to promote generalization by grouping data points into clusters according to their classes. However the reason for its generalization properties appears to be more subtle and defining $\rho$ through the RKHS norm seems to also play a role (minimizing $\rho$ by aligning the eigensubpace corresponding to the lowest eigenvalues of the kernel with the training data.).

  Indeed, using the notations of Sec.~\ref{secKernelFlow}, let $v(x_{f,i}^{(n)})$ be the predicted labels of the $N_f$ points $x_{f,i}^{(n)}$  obtained by interpolating a random subset of $N_c=N_f/2$ points $(x_i,y_i)$ with the kernel $K_1$ and write
  $e_2:=\sum_{i=1}^{N_f} |y_{f,i}^{(n)}- v(x_{f,i}^{(n)})|^2 $ for the mean squared error between training labels and predicted labels.
  Then minimizing $e_2$ instead of $\rho$ may lead to a decreasing test classification accuracy rather than an increasing one as shown in
  Fig.~\ref{figmine2} (using the MNIST dataset with $N=60000$ training points, $10000$ test points, $N_f=600$, $N_c=300$ for the random batches and $N_I=600, 6000$ interpolation points for calculating classification accuracies). Note that although the mean squared interpolation error decreases for the training and the test data, the classification error (on the 10000 test data points of MNIST using 6000 interpolation points) increases.

\begin{table}[h]
 \begin{center}
    \begin{tabular}{ | l | l | l | l | l|}
    \hline
    $N_I$ & Average error & Min error & Max error & Standard Deviation \\ \hline
    $6000$ &   $0.0969   $&$    0.0944  $&$    0.1   $&$   7.56   \times 10^{-4}$\\ \hline
    $600$ &   $0.0977    $&$  0.0951   $&$   0.101  $&$   8.57  \times 10^{-4}$   \\ \hline
    $60$ & $0.114  $&$     0.0958  $&$    0.22   $&$     0.0169   $  \\
    \hline
    $10$ &   $0.444  $&$    0.15   $&$    0.722   $&$    0.096  $\\ \hline
    \end{tabular}
    \caption{Fashion-MNIST test errors (between layers $15000$ and $25000$) using $N_I$ interpolation points}\label{tablef1}
\end{center}
\end{table}

     \begin{table}[h]
 \begin{center}
    \begin{tabular}{ | l | l | l | l | l|}
    \hline
    $N_I$ & Average error & Min error & Max error & Standard Deviation \\ \hline
    $6000$ &   $0.10023 $ & $     0.0999  $&$    0.1006  $&$   1.6316\times 10^{-4}$\\ \hline
    $600$ &   $0.10013  $&$    0.0999   $&$   0.1004   $&$   1.1671\times 10^{-4}$   \\ \hline
    $60$ & $0.10018 $&$     0.0999  $&$    0.1005  $&$   1.445    \times 10^{-4}$  \\
    \hline
    $10$ &   $0.10018  $&$     0.0996 $&$     0.1009 $&$    2.2941 \times 10^{-4}$\\ \hline
    \end{tabular}
    \caption{Fashion-MNIST test errors (between layers $49901$ and $50000$) using $N_I$ interpolation points}\label{tablef2}
\end{center}
\end{table}

\begin{figure}[h]
\begin{center}
\includegraphics[width= \textwidth]{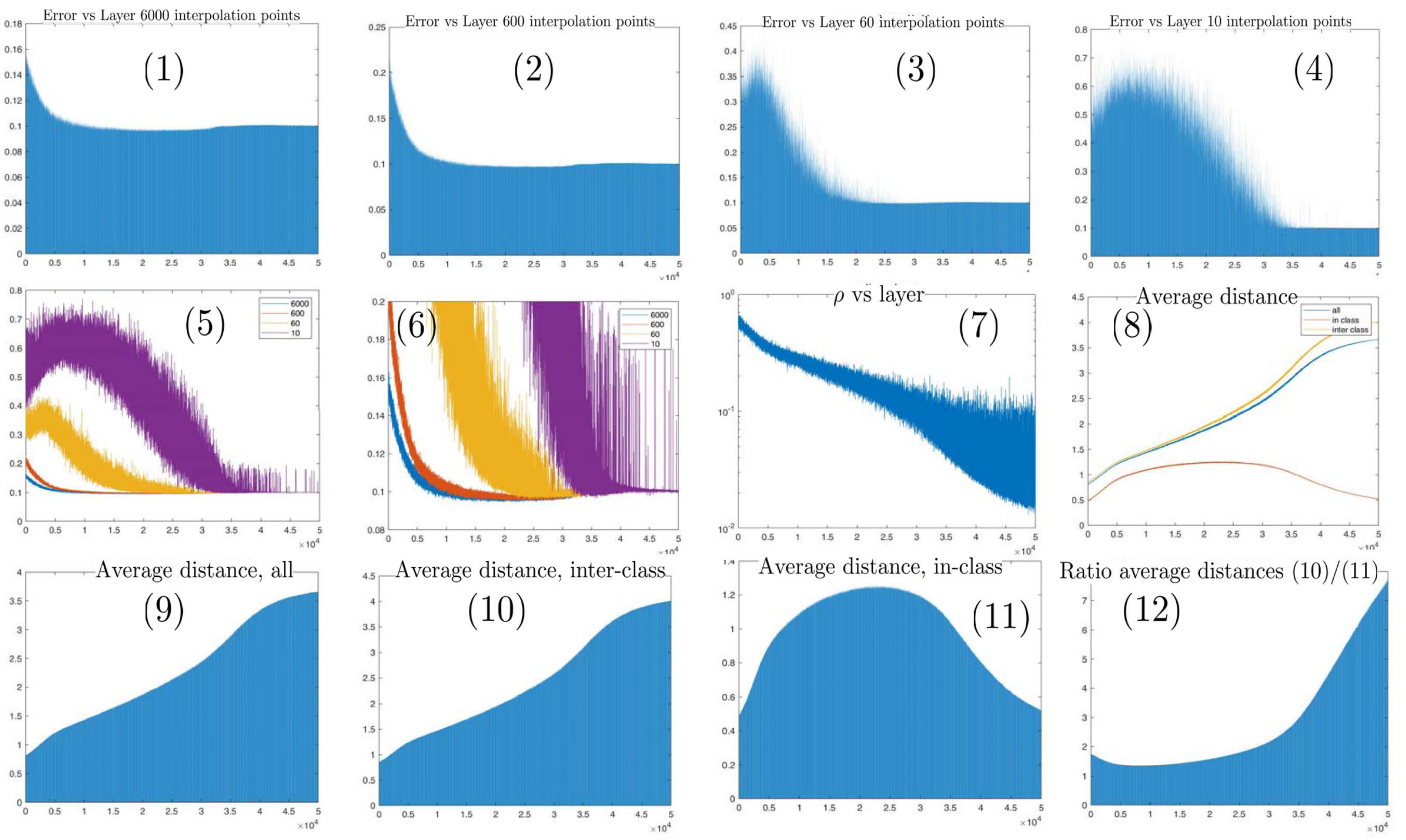}
\caption{Results for Fashion-MNIST. $N=60000$, $N_f=600$ and $N_c=300$. (1) Test error vs depth $n$ with $N_I=6000$ (2) Test error vs depth $n$ with $N_I=600$ (3) Test error vs depth $n$ with $N_I=60$ (4) Test error vs depth $n$ with $N_I=10$  (5,6) Test error vs depth $n$ with $N_I=6000, 600, 60, 10$ (7) $\rho$  vs depth $n$ (8)  Mean-squared distances   between  images $F_n(x_i)$ (all, inter class and in class) vs depth $n$ (9)
 Mean-squared distances   between  images (all) vs depth $n$  (10) Mean-squared distances between  images (inter class) vs depth $n$  (11) Mean-squared distances between  images (in class) vs depth $n$ (12) Ratio $(10)/(11)$. }
\label{figfmnist0}
\end{center}
\end{figure}

\begin{figure}[h]
\begin{center}
\includegraphics[width= \textwidth]{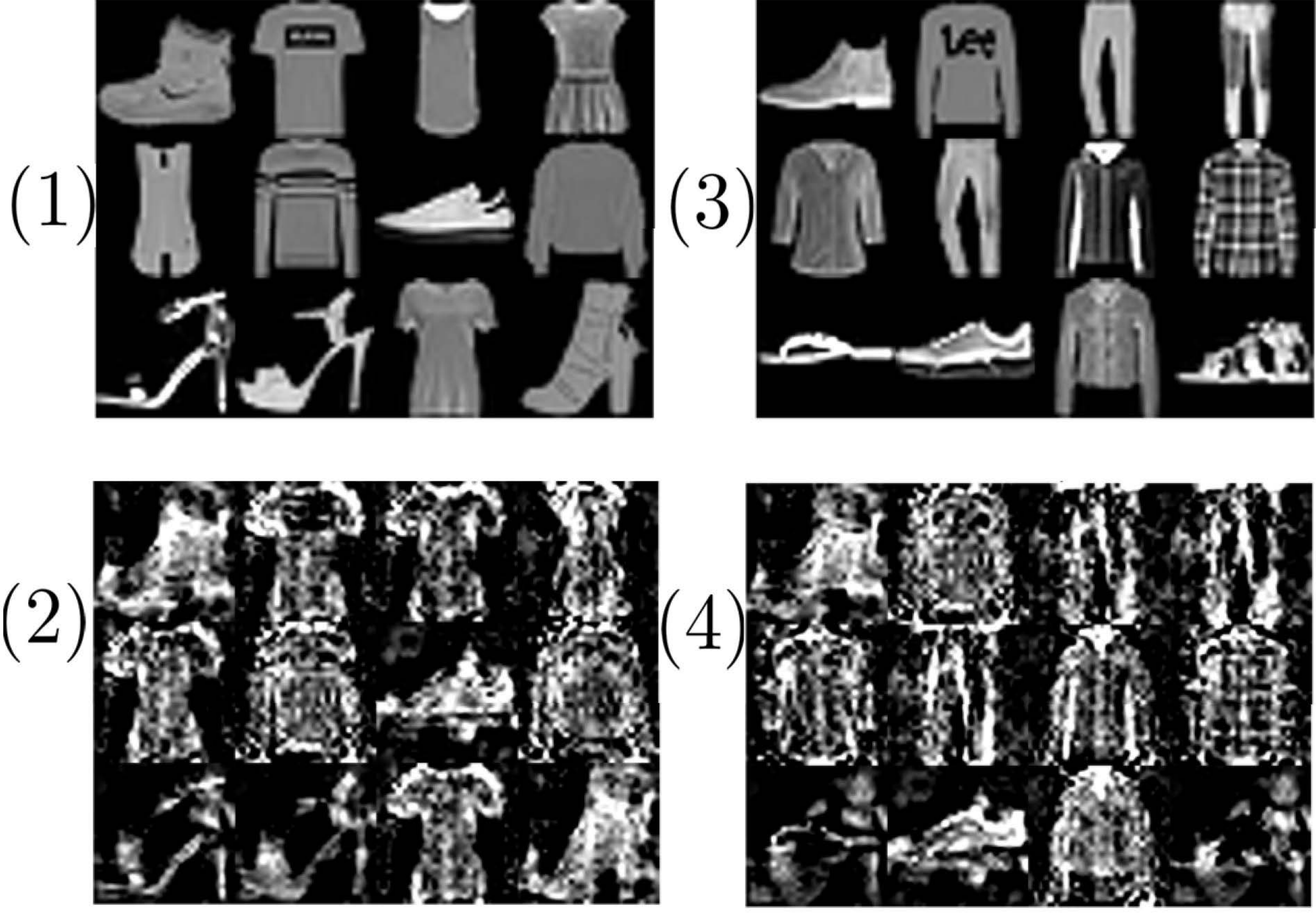}
\caption{Results for Fashion-MNIST. $N=60000$, $N_f=600$ and $N_c=300$. (1) Training data $x_i$ (2) $F_n(x_i)$ training data and $n=50000$ (3) Test data $x_i$ (9) $F_n(x_i)$ for test data and $n=50000$. }
\label{figfmnist1}
\end{center}
\end{figure}

\section{Numerical experiments with the Fashion-MNIST dataset}
We now implement and test  the Sec.~\ref{secKernelFlow} KF algorithm with the Fashion-MNIST dataset \cite{xiao2017fashion}.
As with the  MNIST dataset \cite{yann1998mnist}, the Fashion-MNIST dataset is composed of $60000$, $28\times 28$ images portioned into $10$ classes (T-shirt/top, trouser, pullover, dress, coat, sandal, shirt, sneaker, bag, ankle boot)
with a corresponding vector of $60000$ labels (with values in $\{0,\ldots,9\}$).
The test set is composed of $10000$, $28\times 28$ images of handwritten digits with a corresponding vector of $10000$ labels.
\subsection{Network trained to depth $n=50000$}\label{subsec82782}
The KF algorithm is implemented with the exact same parameters as for the MNIST dataset (Sec.~\ref{subsecwdgejkhdgjd}, in particular it does not require any manual tuning of hyperparameters nor a laborious process of guessing an architecture for the network).
In particular, images are normalized to have $L^2$ norm one we use the Gaussian kernel of Corollary \ref{corfkjhfkfhjf} and set $\gamma^{-1}$ equal to the mean squared distance between training images.
Training is performed in random batches of size $N_f=600$ and we use $N_c=300$ to compute the ratio $\rho$ and learn the parameters of the network (we do not use a nugget and we do not exclude points that are too close from those batches). The value of $\epsilon$ is chosen at each step $n$ so that the  perturbation of each data point $x_i$ of the batch
is no greater than $1\%$ ($\epsilon=0.01\times \max_i \frac{ |x^{(n)}_{f,i}|_{L^2}}{|\hat{g}^{(n)}_{f,i}|_{L^2}}$).

The network is trained to depth $n=50000$. Table \ref{tablef1} shows test error statistics (on the full test dataset) using the kernel $K_n$ for $15000 \leq n\leq 25000$ and $N_I=6000, 600, 60, 10$ interpolation points. Table \ref{tablef2} shows test error statistics (on the full test dataset) using the kernel $K_n$ for $49901 \leq n\leq 50000$ and $N_I=6000, 600, 60, 10$ interpolation points.
Fig.~\ref{figfmnist0} plots test errors vs $n$ using $N_I=6000, 600, 60, 10$ interpolation points and shows average distances between $F_n(x_i)$ vs $n$ (for $x_i$ selected uniformly at random amongst all training images, within the same or in different classes).

Note that although the network  achieves an average test error of $9.7\%$ between layers $15000$ and $25000$ with $N_I=600$, average test errors for $N_I=60, 10$ interpolation points require a depth of more than $37000$ layers to achieve comparable accuracies. Note that the average error around layer $50000$ with $N_I=10$ interpolation points is $10\%$ and does not seem to significantly depend on $N_I$.
The average error ($\approx 9.7\%$) of the classifier with $N_I=600, 6000$ interpolation points between layers $15000$ and $25000$
and the
slight increase of average test errors with $N_I=600, 6000$ interpolation points between layers $25000$ and $50000$ (from  $\approx 9.7\%$ to  $\approx 10\%$) seem to decrease with the value of $\epsilon$. \eqref{eqklejdhkdj} could be interpreted an underlying stochastic differential equation with an explicit scheme with time steps $\epsilon$ and the efficiency of the resulting classifier seems to improve as $\epsilon \downarrow 0$.

Note from Fig.~\ref{figfmnist0}.(8-12), \ref{figfmnist1} and  \ref{figfmnist2} that $F_n(x_i)$  converges towards an archetype of the class of $x_i$ and that (after layer $n\approx 25000$) the KF algorithm  contracts distances within each class while continuing to expand distances between different classes.

Interpolation with $K_1$ and all $N_I=N=60000$ training points used as interpolation points results in $12.75\%$ test error and interpolation with
$K_n$ with $n=50000$ and all $N_I=N=60000$ training points used as interpolation points results in $10\%$ test error. Therefore the KF algorithm appears to bootstrap information contained in the training data in the sense discussed in Sec.~\ref{subsecboot}.
\begin{figure}[h]
\begin{center}
\includegraphics[width= \textwidth]{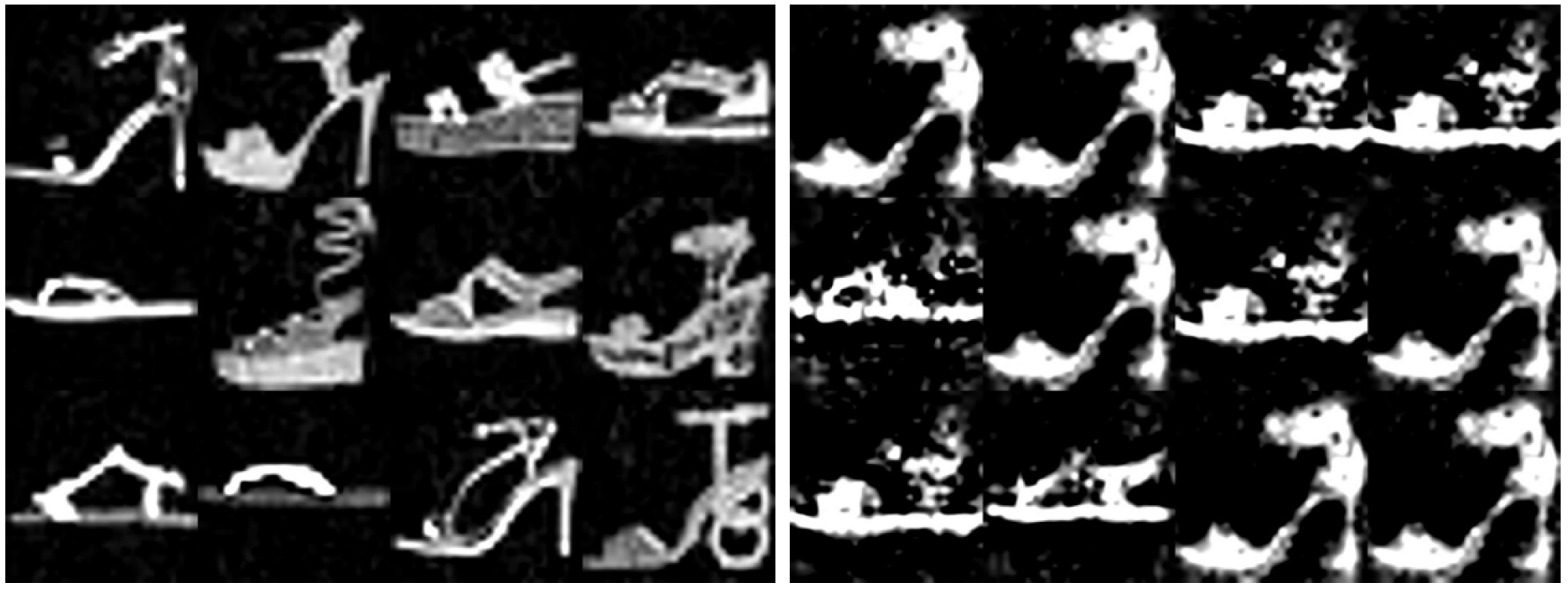}
\caption{Results for Fashion-MNIST. $N=60000$, $N_f=600$ and $N_c=300$. Left: Training data $x_i$ for class $5$. Right: $F_n(x_i)$ training data and $n=11000$. }
\label{figshoes}
\end{center}
\end{figure}

 \subsection{Sign of unsupervised Learning? }\label{secUL}
Fig.~\ref{figshoes} shoes $x_i$ and $F_n(x_i)$ for a group of images in the class 5 (sandal). The network is trained to depth $n=11000$
and the value of $\epsilon$ is chosen at each step $n$ so that the  perturbation of each data point $x_i$ of the batch
is no greater than $10\%$ ($\epsilon=0.1\times \max_i \frac{ |x^{(n)}_{f,i}|_{L^2}}{|\hat{g}^{(n)}_{f,i}|_{L^2}}$).
Note that this value of $\epsilon$ is $10$ times larger than the one of Sec.~\ref{subsec82782}.
Surprisingly the  flow $F_n$ accurately clusters that class (sandal) into $2$ sub-classes: (1)  high heels (2) flat bottom. This is  surprising because the training labels contain no information about such sub-classes:
 KF has created those clusters/sub-classes without supervision.

 \section{Kernel Flows and Convolutional Neural Network}\label{secCNN}

\subsection{MNIST} \label{KFCNN MNIST sec}
The proposed approach  can also be applied to families of kernels parameterized by the weights of a Convolutional Neural Network (CNN) \cite{lecun1995convolutional}.  Such networks are known to achieve superior performance by, to some degree, encoding (i.e. providing prior information about)  known invariants (e.g. to translations) and the hierarchical structure of data generating distribution  into the architecture of the network.

 We will first consider an application the MNIST dataset \cite{yann1998mnist} with $L^2$  normalized test and training images.
The structure of the CNN is the one presented in  \cite{CNNGorner} and its first layers are illustrated in
 Fig. \ref{MNIST layers}.
Given an input/image $x$, the last layer produces a vector  $F(x)\in \R^{300}$ used for SVN classification \cite{cortes1995support} with
the Gaussian kernel $K(x,x')=K_1(F(x), F(x')) = e^{-\gamma |F(x)-F(x')|^2}$ where $4/\gamma$ is the mean squared distance between the $F(x_i)$ (writing $x_i$ for the training images).

\begin{figure}[h]
\begin{center}
\includegraphics[width= 0.9\textwidth]{./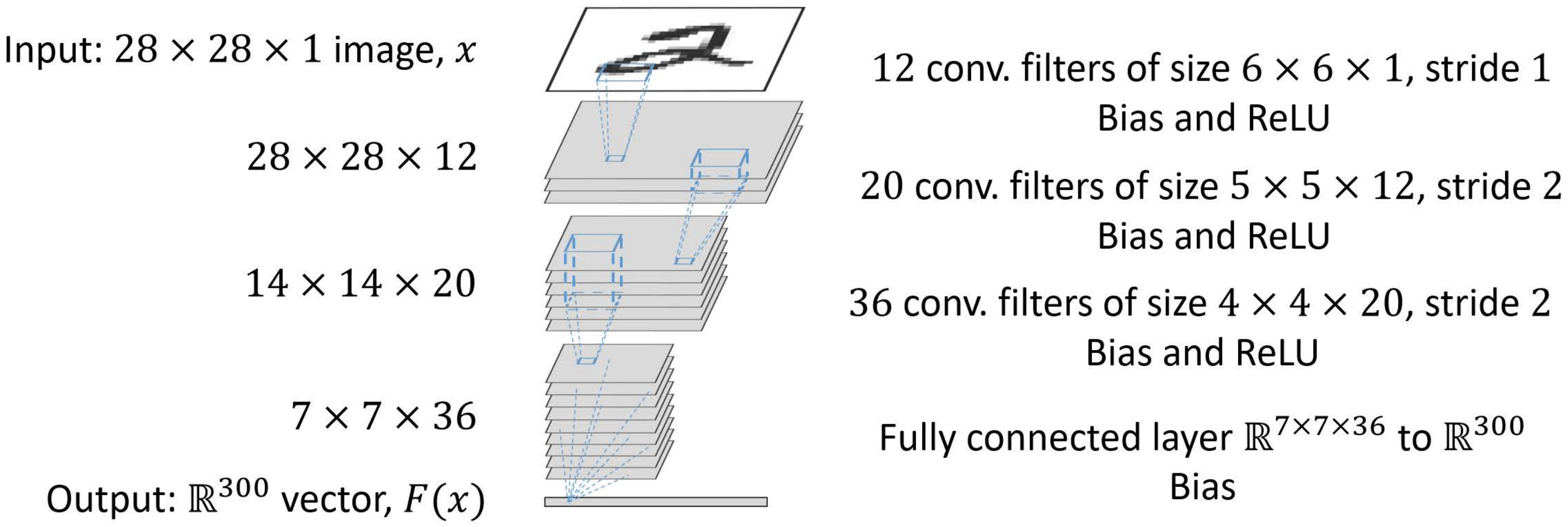}
\caption{Convolutional filters used for MNIST classification \cite{CNNGorner}.}\label{MNIST layers}

\end{center}
\end{figure}

The training of the  filters (weights of the network) is done  as described in sections \ref{sec3} and \ref{secfamker} using  random batches of $N_f = 500$ images (sampled uniformly without replacement out of $N = 60000$ training images) and  sub-batches of $N_c = 250$
images (sampled uniformly without replacement out of the batch of $N_f=500$ images). As in Sec.~\ref{sec3}, write $v^\dagger$ and $v^\s$ for optimal recoveries using the kernel $K$, and respectively, the batch of $N_f$ and the sub-batch of $N_c$ interpolation points.

Writing $y_i\in \R^{10}$ for the label of the image $x_i$, the relative approximation error (in the RKHS norm associated with $K$) caused by halving the number of points is (using, for simplicity, the notations of  Sec.~\ref{sec3} to describe the computation of $\rho$ for one batch)
\begin{equation}\label{def rho 7}
    \rho=1-\frac{\Tr{(y^T \tilde{A} y)}}{\Tr{(y^T A y)}}\,.
\end{equation}
where $y\in \R^{N_f\times 10}$, $\tilde{A}, A\in \R^{N_f\times N_f}$.
We will also consider the mean squared error
\begin{equation}\label{def e2 7}
    e_2 = \frac{2}{N_f}\sum_{i = 1}^{N_f} \big|y_{i} - v^\s (x_i)\big|^2 \,,
\end{equation}
where the sum is taken over the $N_f$ elements of the batch (note that $y_{i} - v^\s (x_i)=0$ when $i$ is in the sub-batch of $N_c$ elements used as interpolation points for $v^\s$).

To train the network we simply let the Adam optimizer \cite{AdamOpt} in TensorFlow   minimize $\rho$ or $e_2$ (used as cost functions, which does not require the manual identification of their Fr\'{e}chet derivatives with respect to the weights of the network).

Table \ref{table3} shows statistics of the corresponding test errors using the kernel $K$  learned above (using all $N=60000$ images in batches of size $N_f=500$) and five randomly selected subsets of $N_I = 12000$ training images as interpolation points.
Each run consisted of $10000$ iterations and test errors were calculated on the final iteration.   Fig. \ref{rho e plot} shows the values of $\rho$ and $e_2$ evaluated at every $100$ iterations for both algorithms (minimizing $\rho$ and $e_2$). When trained with relative entropy and dropout \cite{srivastava2014dropout}  Gorner reports \cite{CNNGorner} a minimum classification error of
$0.65\%$ testing every $100$ iterations over $5$ runs.  Since we are using the same CNN architecture, this appears to suggest that the proposed approach (of minimizing $\rho$ or $e_2$) might lead to better test accuracies than training with relative entropy and dropout.

     \begin{table}[h]
 \begin{center}
    \begin{tabular}{ | l | l | l | l | l|}
    \hline
    Algorithm & Average error & Min error & Max error & Standard Deviation \\ \hline
    Minimizing $e_2$ &     $0.596 \%$   &   $0.55 \%$   &   $0.63\%$  &   $0.032 \%$\\ \hline
    Minimizing $\rho$ &   $0.640 \%$   &   $0.60 \%$   &   $0.70\%$  &   $0.034 \%$   \\ \hline
    \end{tabular}
    \caption{Test error statistics using $N_I = 12000$ interpolation points at iteration $10000$ over $5$ runs.}\label{table3}
\end{center}
\end{table}

\begin{figure}[h]
\begin{center}
\includegraphics[width= \textwidth]{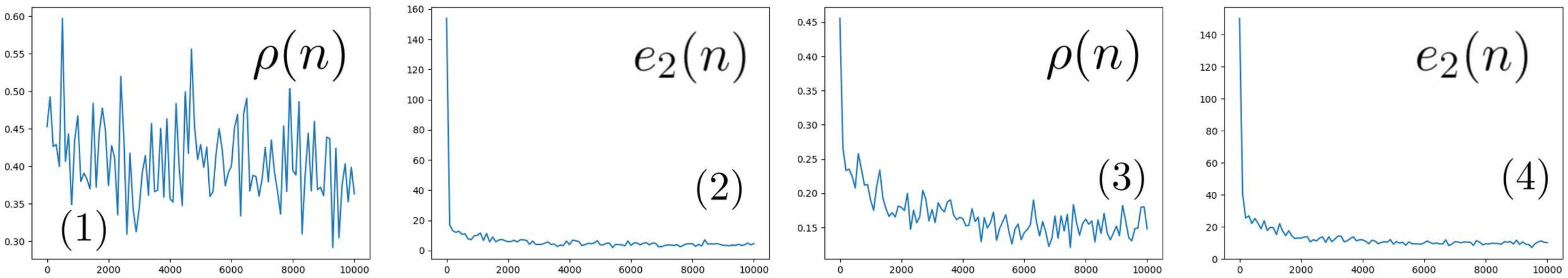}
\caption{$(1)$ and $(2)$ show $\rho$ and $e_2$ respectively evaluated at the $n$-th batch using the $e_2$ minimizing network.  $(3)$ and $(4)$ show analogous plots for the $\rho$ minimizing network.}
\label{rho e plot}
\end{center}
\end{figure}

\subsubsection{Interpolation with small subsets of the training set}
Fig.~\ref{multi_erho} shows test errors using the kernel $K$  learned above (using all $N=60000$ images in batches of size $N_f=500$) and randomly selected subsets of $N_I = 30000, 12000, 6000, 600, 60, 10$ training images as interpolation points.

\begin{figure}[h!]
\begin{center}
\includegraphics[width= \textwidth]{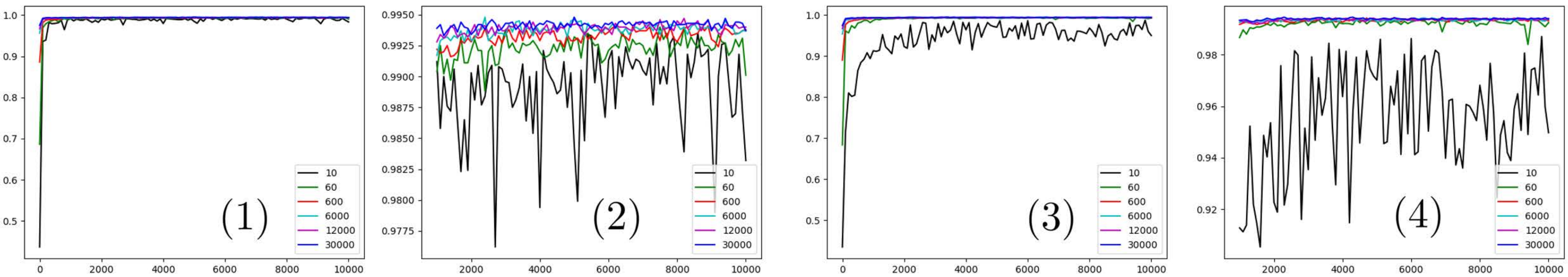}
\caption{(1) Classification test errors for $N_I = 10, 60, 600, 6000, 12000, 30000$  evaluated at the $n$-th batch for $0\leq n \leq 10000$ using the network minimizing $e_2$.  (2) same as (1) with $1000 \leq n \leq 10000$.  (3), (4) same as (1), (2) for the network minimizing $\rho$.}
\label{multi_erho}
\end{center}
\end{figure}

Tables  \ref{table5} and \ref{table7} show test errors statistics using the kernel $K$ (learned above with $N_f=500$) with
$N_I=6000, 600, 60, 10$ interpolation points sampled at random (all  use the same convolutional filters obtained in a single optimization run). Averages, min, max and STD are computed over iterations between iterations $9900$ to $10000$
 using $5$ independent runs of the Adam optimizer \cite{AdamOpt} with
$\rho$ and $e_2$ as objective functions.

     \begin{table}[h!]
 \begin{center}
    \begin{tabular}{ | l | l | l | l | l|}
    \hline
    $N_I$ & Average error & Min error & Max error & Standard Deviation \\ \hline
    $6000$ &     $0.575 \%$   &   $0.42 \%$   &   $0.72\%$  &   $0.052 \%$\\ \hline
    $600$ &   $0.628 \%$   &   $0.48 \%$   &   $0.83\%$  &   $0.062 \%$   \\ \hline
    $60$ & $0.728 \%$   &   $0.51 \%$   &   $1.23\%$  &   $0.103 \%$  \\
    \hline
    $10$ &   $1.05 \%$   &   $0.58 \%$   &   $4.81\%$  &   $0.375 \%$\\ \hline
    \end{tabular}
    \caption{Test error statistics using $N_I$ interpolation points between iterations $9900$ and $10000$ over $5$ runs of optimizing $e_2$.}\label{table5}
\end{center}
\end{table}

     \begin{table}[h!]
 \begin{center}
    \begin{tabular}{ | l | l | l | l | l|}
        \hline
    $N_I$ & Average error & Min error & Max error & Standard Deviation \\ \hline
    $6000$ &     $0.646 \%$   &   $0.51 \%$   &   $0.78\%$  &   $0.046 \%$\\ \hline
    $600$ &   $0.676 \%$   &   $0.56 \%$   &   $0.82\%$  &   $0.047 \%$   \\ \hline
    $60$ & $0.850 \%$   &   $0.58 \%$   &   $3.98\%$  &   $0.357 \%$  \\
    \hline
    $10$ &   $4.434 \%$   &   $0.97 \%$   &   $18.91\%$  &   $2.320 \%$\\ \hline
    \end{tabular}
    \caption{Test error statistics using $N_I$ interpolation points between iterations $9900$ and $10000$ over $5$ runs of optimizing $\rho$.}\label{table7}
\end{center}
\end{table}

\begin{figure}[h!]
\begin{center}
\includegraphics[width= 0.6\textwidth]{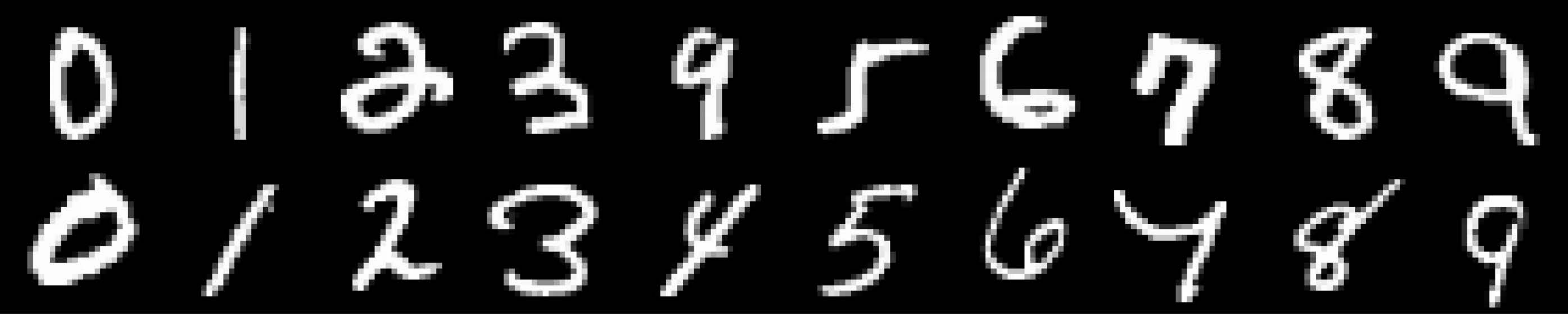}
\caption{A ``bad'' (top) and ``good'' (bottom) selection of $10$ interpolation points.}
\label{badgood10}
\end{center}
\end{figure}

Observe that, although as with Kernel Flow, using only a small fraction of the  training data as interpolation points is
sufficient to achieve low classification errors (the minimum error with $10$ interpolation points is $0.58\%$), interpolation with only one image per class appears to be more sensitive to the particular selection of $10$ interpolation points. Fig.~\ref{badgood10} shows an example of a  ``good'' and a ``bad'' selection for the interpolation with $10$ points.

The clustering of the $F(x_i)$ ($x_i$ are training images and $F(x)\in \R^{300}$ is the output of the last layer of the CNN with input $x$) is a possible explanation for this extreme  generalization.  Fig.~\ref{dist plot} shows the average mean squared Euclidean distance between
$F(x_i)$ in the same class and in distinct classes.
Note that the ratio between average square distances between two inter-class and two in-class points approaches $12$ (for the network optimizing $e_2$), suggesting that the map $F$  clusters of images per class.

\begin{figure}[h!]
\begin{center}
\includegraphics[width= \textwidth]{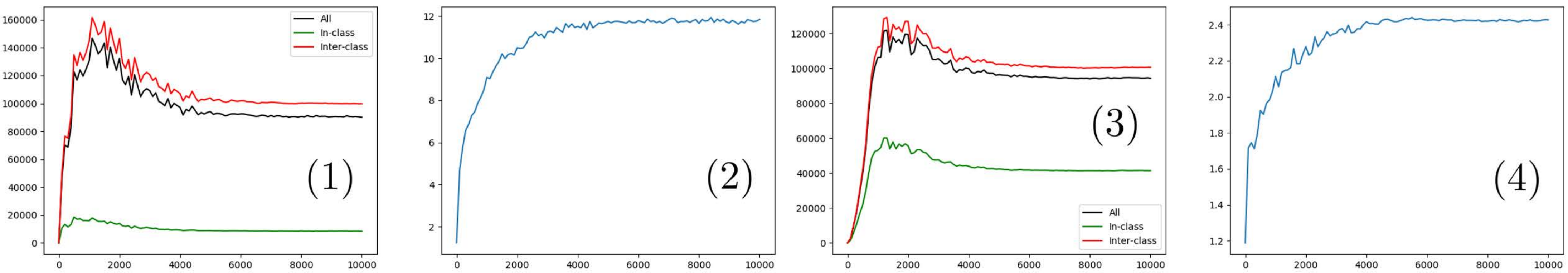}
\caption{$(1)$ Mean-squared distance between $F(x_i)$ (all, in-class, and inter-class) vs iteration $n$ for the network optimizing $e_2$ $(2)$ Ratio between inter-class and in-class mean-squared distance for the network optimizing $e_2$.  $(3)$ and $(4)$ are identical except for the network which optimizes $\rho$.}
\label{dist plot}
\end{center}
\end{figure}

\subsection{Fashion MNIST}

We now apply the proposed approach to  the Fashion-MNIST database.  The architecture of the CNN is derived from \cite{FashionCNN} and the first layers of the network are shown in Fig. \ref{Fashion MNIST layers}.

\begin{figure}[h]
\begin{center}
\includegraphics[width= 0.9\textwidth]{./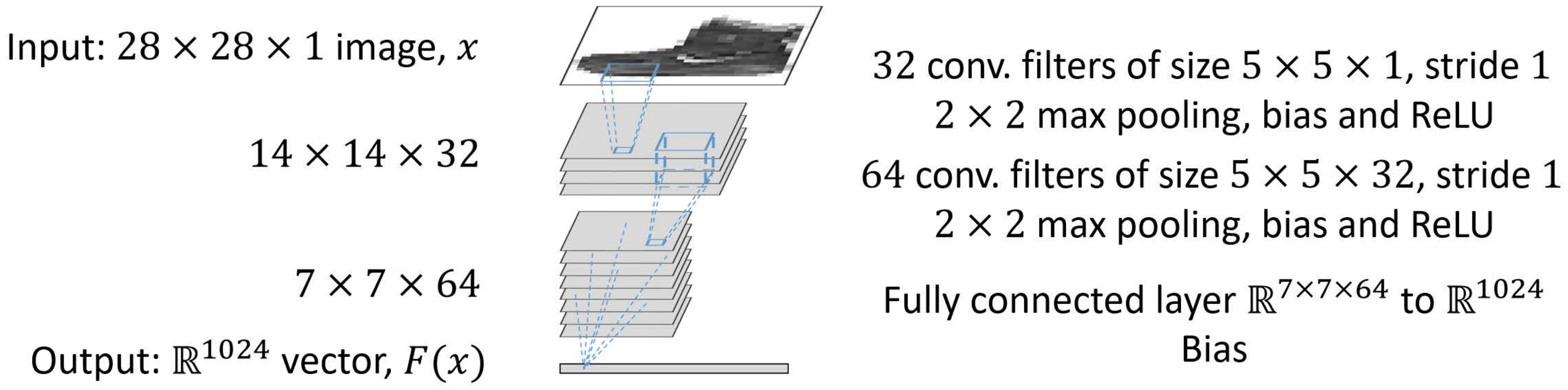}
\caption{Convolutional filters used for Fashion MNIST classification \cite{FashionCNN}.}\label{Fashion MNIST layers}

\end{center}
\end{figure}

The classification of test images is done as in Sec.~\ref{KFCNN MNIST sec}.

Table \ref{rho e error fashion} shows  test errors statistics (over $5$ runs) after training (using $10000$ iterations and computing test errors at the final iteration) by minimizing  $\rho$ or $e_2$
 (as defined in \eqref{def rho 7} and \eqref{def e2 7}) using  $N_I = 12000$ interpolation points
  Mahajan \cite{FashionCNN} reports a testing error of $8.6\%$ when using the validation set to obtain the convolutional filters.  As above, this suggests that the proposed approach could lead to better test accuracies than training with relative entropy and dropout.  Finally, Fig.~\ref{rho e plot fashion} shows $\rho$ and $e_2$ evaluated at every $100$ iterations for both algorithms.

     \begin{table}[h]
 \begin{center}
    \begin{tabular}{ | l | l | l | l | l|}
    \hline
    Algorithm & Average error & Min error & Max error & Standard Deviation \\ \hline
    Optimizing $e_2$ &     $8.474 \%$   &   $8.24 \%$   &   $8.70\%$  &   $0.147 \%$\\ \hline
    Optimizing $\rho$ &   $8.412 \%$   &   $8.29 \%$   &   $8.56\%$  &   $0.091 \%$   \\ \hline
    \end{tabular}
    \caption{Test error statistics using $N_I = 12000$ interpolation points at iteration $10000$ over $5$ runs.}\label{rho e error fashion}
\end{center}
\end{table}

\begin{figure}[h]
\begin{center}
\includegraphics[width= \textwidth]{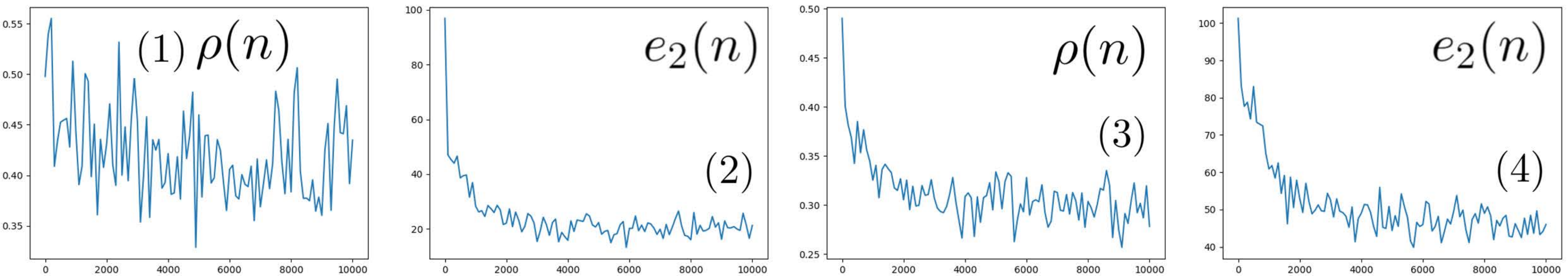}
\caption{$(1)$ and $(2)$ show $\rho$ and $e_2$ respectively evaluated at the $n$-th batch using the $e_2$ minimizing network.  $(3)$ and $(4)$ show analogous plots for the $\rho$ minimizing network.}
\label{rho e plot fashion}
\end{center}
\end{figure}

\subsubsection{Interpolation with small subsets of the training set}
Fig.~\ref{multi_erho fashion} shows test errors using the kernel $K$  learned above (using all $N=60000$ images in batches of size $N_f=500$) and randomly selected subsets of $N_I = 30000, 12000, 6000, 600, 60, 10$ training images as interpolation points.

Further, the minimum errors in Fig.~\ref{multi_erho fashion}.1, 3 are observed to be $8.02 \%$ and $7.89 \%$ respectively, where both minima used $N_I = 30000$."

\begin{figure}[h!]
\begin{center}
\includegraphics[width= \textwidth]{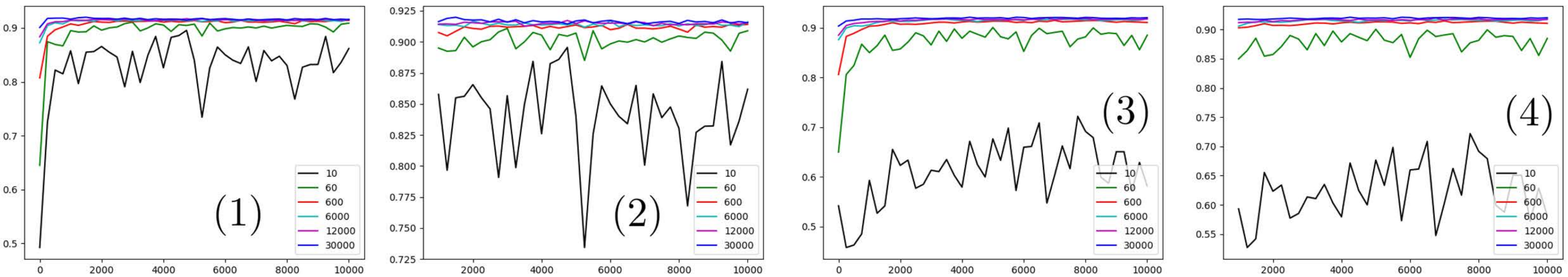}
\caption{(1) Classification test errors for $N_I = 10, 60, 600, 6000, 12000, 30000$  evaluated at the $n$-th batch for $0\leq n \leq 10000$ using the network minimizing $e_2$.  (2) same as (1) with $1000 \leq n \leq 10000$.  (3), (4) same as (1), (2) for the network minimizing $\rho$.}
\label{multi_erho fashion}
\end{center}
\end{figure}

Tables \ref{10k e2 fashion} and \ref{10k rho fashion} show test errors statistics using the kernel $K$ (learned above with $N_f=500$) with
$N_I=6000, 600, 60, 10$ interpolation points sampled at random (all  use the same convolutional filters obtained in a single optimization run). Averages, min, max and STD are computed over iterations between $9900$ and $10000$
 using $5$ independent runs of the Adam optimizer \cite{AdamOpt} with
$\rho$ and $e_2$ as objective functions.

     \begin{table}[h!]
 \begin{center}
    \begin{tabular}{ | l | l | l | l | l|}
    \hline
    $N_I$ & Average error & Min error & Max error & Standard Deviation \\ \hline
    $6000$ &     $8.561 \%$   &   $8.23 \%$   &   $8.97\%$  &   $0.135 \%$\\ \hline
    $600$ &   $8.724 \%$   &   $8.31 \%$   &   $9.26\%$  &   $0.161 \%$   \\ \hline
    $60$ & $9.677 \%$   &   $8.77 \%$   &   $11.48\%$  &   $0.486 \%$  \\
    \hline
    $10$ &   $15.261 \%$   &   $10.00 \%$   &   $32.69\%$  &   $3.196 \%$\\ \hline
    \end{tabular}
    \caption{Test error statistics using $N_I$ interpolation points between iterations $9900$ and $10000$ over $5$ runs of optimizing $e_2$.}\label{10k e2 fashion}
\end{center}
\end{table}

     \begin{table}[h!]
 \begin{center}
    \begin{tabular}{ | l | l | l | l | l|}
    \hline
    $N_I$ & Average error & Min error & Max error & Standard Deviation \\ \hline
    $6000$ &     $8.526 \%$   &   $8.17 \%$   &   $8.96\%$  &   $0.120 \%$\\ \hline
    $600$ &   $8.810 \%$   &   $8.36 \%$   &   $9.29\%$  &   $0.140 \%$   \\ \hline
    $60$ & $11.677 \%$   &   $9.32 \%$   &   $18.03\%$  &   $1.437 \%$  \\
    \hline
    $10$ &   $36.642 \%$   &   $23.44 \%$   &   $53.56\%$  &   $4.900 \%$\\ \hline
    \end{tabular}
    \caption{Test error statistics using $N_I$ interpolation points between iterations $9900$ and $10000$ over $5$ runs of optimizing $\rho$.}\label{10k rho fashion}
\end{center}
\end{table}

It can again be observed that using only a small fraction of the training data as interpolation points yields relatively low classification errors. The instability of test errors with $N_I = 10$ interpolation points, compared to the Kernel Flow algorithm proposed in Sec.~\ref{secKernelFlow}  seem to suggest that deep architectures might be required to achieve stable results with only one interpolation point per  class.

The clustering of the $F(x_i)$ ($x_i$ are training images and $F(x)\in \R^{300}$ is the output of the last layer of the CNN with input $x$) using the network optimizing $e_2$ is shown in Fig.~\ref{dist plot fashion}.  The figure shows the average mean squared Euclidean distance between $F(x_i)$ in the same class and in distinct classes.  Note that the ratio between average square distances between two inter-class and two in-class points approaches $4.5$, suggesting that the map $F$ aggregates images per class.

\begin{figure}[h!]
\begin{center}
\includegraphics[width= \textwidth]{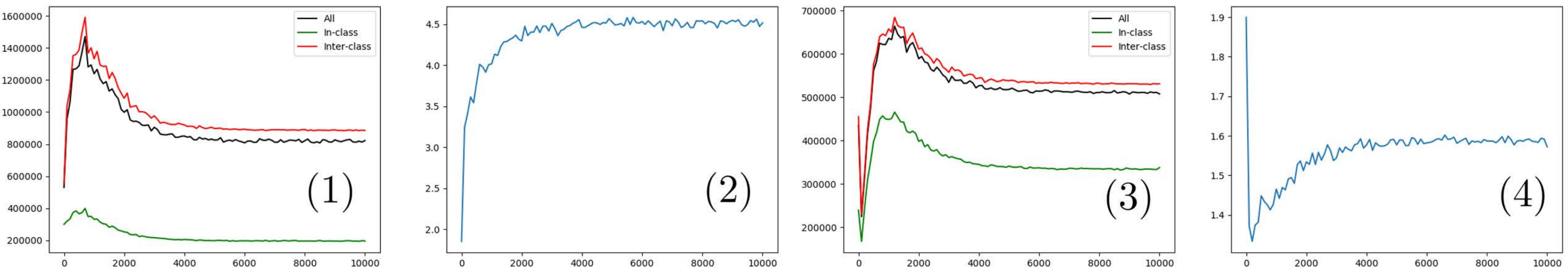}
\caption{$(1)$ Mean-squared distance between $F(x_i)$ (all, in-class, and inter-class) vs iteration $n$ for the network optimizing $e_2$ $(2)$ Ratio between inter-class and in-class mean-squared distance for the network optimizing $e_2$.  $(3)$ and $(4)$ are identical except for the network which optimizes $\rho$.}
\label{dist plot fashion}
\end{center}
\end{figure}

\paragraph{Acknowledgments.}
The authors gratefully acknowledges this work supported by  the Air Force Office of Scientific Research and the DARPA EQUiPS Program under award   number FA9550-16-1-0054 (Computational Information Games) and the Air Force Office of Scientific Research under award number FA9550-18-1-0271 (Games for Computation and Learning).

\bibliographystyle{plain}
\bibliography{merged,RPS,extra}

\end{document}